\DeclareMathAlphabet\mathbb{U}{msb}{m}{n}
\def\Rset{\mathbb{R}}
\DeclareMathOperator*{\E}{\mathbb E}
\DeclareMathOperator*{\argmin}{argmin}
\DeclarePairedDelimiter{\abs}{\lvert}{\rvert} 
\DeclarePairedDelimiter{\bracket}{[}{]}
\DeclarePairedDelimiter{\curl}{\{}{\}}
\DeclarePairedDelimiter{\norm}{\lVert}{\rVert}
\DeclarePairedDelimiter{\paren}{(}{)}
\newcommand{\sA}{{\mathscr A}}
\newcommand{\sC}{{\mathscr C}}
\newcommand{\sD}{{\mathscr D}}
\newcommand{\sF}{{\mathscr F}}
\newcommand{\sH}{{\mathscr H}}
\newcommand{\sM}{{\mathscr M}}
\newcommand{\sR}{{\mathscr R}}
\newcommand{\sT}{{\mathscr T}}
\newcommand{\sX}{{\mathscr X}}
\newcommand{\sY}{{\mathscr Y}}
\newcommand{\by}{{\mathbf y}}
\newcommand{\bw}{{\mathbf w}}
\newcommand{\sfL}{{\mathsf L}}
\newcommand{\ov}{\overline}
\newcommand{\wt}{\widetilde}
\newcommand{\e}{\epsilon}
\newcommand{\ignore}[1]{}
\DeclareMathOperator{\sgn}{sgn}
\DeclareMathOperator{\sign}{sign}
\def\Nset{\mathbb{N}}
\newcommand{\hh}{{\sf h}}
\newcommand{\yy}{{\sf y}}
\newcommand{\q}{{q}}
\newcommand{\num}{l}
\newcommand{\tp}{\textsf{TP}}
\newcommand{\tn}{\textsf{TN}}
\newcommand{\fp}{\textsf{FP}}
\newcommand{\fn}{\textsf{FN}}
\newcommand{\sur}{\wt \sfL}
\newcommand{\ch}{\mathsf c_{\hh}}
\newcommand{\cy}{\mathsf c_{\yy}}
\newcommand{\cyy}{\mathsf c_{y'}}
\newcommand{\s}{{\sf s}}
\newcommand{\qq}{{\sf q}}
\newcommand{\z}{{\sf z}}
\newcommand{\sh}{\mathsf s_{\hh}}
\newcommand{\sy}{\mathsf s_{\yy}}
\newcommand{\syy}{\mathsf s_{y'}}
\newcommand{\zh}{\mathsf z_{\hh}}
\newcommand{\zy}{\mathsf z_{\yy}}
\newcommand{\zyy}{\mathsf z_{y'}}
\title[Multi-Label Learning with Stronger Consistency Guarantees]{Multi-Label Learning with Stronger Consistency Guarantees}
\begin{document}

\maketitle

\begin{abstract}

We present a detailed study of surrogate losses and algorithms for
multi-label learning, supported by $\sH$-consistency bounds. We first
show that, for the simplest form of multi-label loss (the popular
Hamming loss), the well-known consistent binary relevance surrogate
suffers from a sub-optimal dependency on the number of labels in terms
of $\sH$-consistency bounds, when using smooth losses such as logistic
losses. Furthermore, this loss function fails to account for label
correlations.  To address these drawbacks, we introduce a novel
surrogate loss, \emph{multi-label logistic loss}, that accounts for
label correlations and benefits from label-independent
$\sH$-consistency bounds.  We then broaden our analysis to cover a
more extensive family of multi-label losses, including all common ones
and a new extension defined based on linear-fractional functions with
respect to the confusion matrix. We also extend our multi-label
logistic losses to more comprehensive multi-label comp-sum losses,
adapting comp-sum losses from standard classification to the
multi-label learning. We prove that this family of surrogate losses
benefits from $\sH$-consistency bounds, and thus Bayes-consistency,
across any general multi-label loss. Our work thus proposes a unified
surrogate loss framework benefiting from strong consistency guarantees
for any multi-label loss, significantly expanding upon previous work
which only established Bayes-consistency and for specific loss
functions. Additionally, we adapt constrained losses from standard
classification to multi-label constrained losses in a similar way,
which also benefit from $\sH$-consistency bounds and thus
Bayes-consistency for any multi-label loss. We further describe
efficient gradient computation algorithms for minimizing the
multi-label logistic loss.

\end{abstract}

% \begin{keywords}%

% \end{keywords}

\section{Introduction}
\label{sec:intro}

Supervised learning methods often assign a single label to each
instance.  However, real-world data exhibits a more complex structure,
with objects belonging to multiple categories simultaneously. Consider
a video about sports training, which could be categorized as both
`health' and `athletics,' or a culinary blog post tagged with
`cooking' and `nutrition'.  As a result, multi-label learning
\citep{mccallum1999multi,schapire2000boostexter} has become
increasingly important, leading to the development of various
interesting and effective approaches, predominantly experimental in
nature, in recent years
\citep{elisseeff2001kernel,deng2011fast,petterson2011submodular,
  kapoor2012multilabel}.

Although there is a rich literature on multi-label learning (see
\citep{zhang2013review} and \citep{bogatinovski2022comprehensive} for
detailed surveys), only a few studies focus on the theoretical
analysis of multi-label learning, particularly the study of
the Bayes-consistency of surrogate losses
\citep{Zhang2003,zhang2004statistical,bartlett2006convexity,
  tewari2007consistency,steinwart2007compare}.

\citet{gao2011consistency} initiated the study of Bayes-consistency in
multi-label learning with respect to Hamming loss and (partial)
ranking loss. They provided negative results for ranking loss,
demonstrating that no convex and differentiable pairwise surrogate
loss is Bayes-consistent for that multi-label loss.  They also showed
that the \emph{binary relevance} method, which learns an independent
binary classifier for each of the $\num$ labels, is Bayes-consistent
with respect to the Hamming loss.  \citet{dembczynski2011exact}
further demonstrated that under the assumption of conditionally
independent labels, the \emph{binary relevance} method is also
Bayes-consistent with respect to the $F_{\beta}$ measure
loss. However, they noted that it can perform arbitrarily poorly when
this assumption does not hold.  \citet{dembczynski2012consistent}
provided a positive result for the (partial) ranking loss by showing
that the simpler univariate variants of smooth surrogate losses are
Bayes-consistent with respect to it. Additionally,
\citet{zhang2020convex} proposed a family of Bayes-consistent
surrogate losses for the $F_{\beta}$ measure by reducing the
$F_{\beta}$ learning problem to a set of binary class probability
estimation problems. This approach was motivated by the consistent
output coding scheme in \citep{ramaswamy2014consistency} for general
multiclass problems. Other works have studied generalization bounds in multi-label learning \citep{yu2014large,wydmuch2018no,wu2020multi,wu2021rethinking,wu2023towards,busa2022regret}.

Another related topic is the characterization of the Bayes classifier
and corresponding Bayes-consistent plug-in algorithm in multi-label
learning. This includes the characterization of the Bayes classifier
for subset $0/1$ loss and Hamming loss in \citep{cheng2010bayes} and
the characterization of the Bayes classifier for $F_1$ measure in
\citep{dembczynski2011exact}. \citet{dembczynski2013optimizing,
  waegeman2014bayes} further extended the results in
\citep{dembczynski2011exact} by designing a Bayes-consistent plug-in
algorithm for the $F_{\beta}$ measure. \citet{koyejo2015consistent}
characterized the Bayes classifier for general linear fractional
losses with respect to the confusion matrix and designed the
corresponding plug-in algorithms in the empirical utility maximization
(EUM) framework. In this framework, the measures are directly defined
as functions of the population, in contrast to a loss function that is
defined as a function over a single instance in the decision theoretic
analysis (DTA) framework
\citep{ye2012optimizing}. \citet{menon2019multilabel} studied the
Bayes-consistency of various reduction methods with respect to
Precision@$\kappa$ and Recall@$\kappa$ in multi-label
learning. However, all these publications only established
Bayes-consistency for specific loss functions.  Can we derive a
unified surrogate loss framework that is Bayes-consistent for any
multi-label loss?

Furthermore, as \citet{awasthi2022Hconsistency,awasthi2022multi}
pointed out, Bayes-consistency is an asymptotic guarantee and does not
provide convergence guarantees. It also applies only to the family of
all measurable functions unlike the restricted hypothesis sets
typically used in practice.  Instead, they proposed a stronger
guarantee known as \emph{$\sH$-consistency bounds}, which are both
non-asymptotic and account for the hypothesis set while implying
Bayes-consistency. These guarantees provide upper bounds on the target
estimation error in terms of the surrogate estimation error.  Can we
leverage this state-of-the-art consistency guarantee when designing
surrogate loss functions for multi-label learning?

Moreover, one of the main concerns in multi-label learning is label
correlations (see \citep{dembczynski2012label}). For the simplest form
of multi-label loss, the popular Hamming loss, the existing
Bayes-consistent binary relevance surrogate fails to account for label
correlations. Can we design consistent loss functions that effectively
account for label correlations as well?

\textbf{Our Contributions.} This paper directly addresses these key
questions in multi-label learning.  We present a detailed study of
surrogate losses and algorithms for multi-label learning, supported by
$\sH$-consistency bounds.

In Section~\ref{sec:ham}, we first show that for the simplest form of
multi-label loss, the popular Hamming loss, the well-known consistent
binary relevance surrogate, when using smooth losses such as logistic
losses, suffers from a sub-optimal dependency on the number of labels
in terms of $\sH$-consistency bounds. Furthermore, this loss function
fails to account for label correlations.

To address these drawbacks, we introduce a novel surrogate loss,
\emph{multi-label logistic loss}, that accounts for label correlations
and benefits from label-independent $\sH$-consistency bounds
(Section~\ref{sec:log}).  We then broaden our analysis to cover a more
extensive family of multi-label losses, including all common ones and
a new extension defined based on linear-fractional functions with
respect to the confusion matrix (Section~\ref{sec:general}).

In Section~\ref{sec:comp}, we also extend our multi-label logistic
losses to more comprehensive multi-label comp-sum losses, adapting
comp-sum losses from standard classification to the multi-label
learning. We prove that this family of surrogate losses benefits from
$\sH$-consistency bounds, and thus Bayes-consistency, across any
general multi-label loss. Our work thus proposes a unified surrogate
loss framework that is Bayes-consistent for any multi-label loss,
significantly expanding upon previous work which only established
consistency for specific loss functions.

Additionally, we adapt constrained losses from standard classification
to multi-label constrained losses in a similar way, which also benefit
from $\sH$-consistency bounds and thus Bayes-consistency for any
multi-label loss (Appendix~\ref{sec:cstnd}). We further describe
efficient gradient computation algorithms for minimizing the
multi-label logistic loss (Section~\ref{sec:algo}).

\section{Preliminaries}
\label{sec:pre}

\textbf{Multi-label learning.} We consider the standard multi-label
learning setting. Let $\sX$ be the input space and $\sY = \curl*{+1,
  -1}^\num$ the set of all possible labels or tags, where $\num$ is a
finite number. For example, $\sX$ can be a set of images, and $\sY$
can be a set of $\num$ pre-given tags (such as 'flowers', 'shoes', or
'books') that can be associated with each image in the image tagging
problem. Let $n = \abs*{\sY}$. For any instance $x \in \sX$ and its
associated label $y = \paren*{y_1, \ldots, y_{\num}} \in \sY$, if $y_i
= +1$, we say that label $i$ is relevant to $x$. Otherwise, it is not
relevant. Let $[\num] = \curl*{1, \ldots, \num}$. Given a sample $S$
drawn i.i.d.\ according to some distribution $\sD$ over $\sX \times
\sY$, the goal of multi-label learning is to learn a hypothesis $h
\colon \sX \times [\num] \to \Rset$ to minimize the generalization
error defined by a multi-label loss function $\sfL \colon
\sH_{\rm{all}} \times \sX \times \sY \to \Rset$,
\begin{equation*}
    \sR_{\sfL}(h) = \E_{(x, y) \sim \sD} \bracket*{ \sfL(h, x, y) },
\end{equation*}
where $\sH_{\rm{all}}$ is the family of all measurable hypotheses. For
convenience, we abusively denote the scoring vector by $h(x) =
\paren*{h(x, 1), \ldots, h(x, \num)}$. Given a hypothesis set $\sH
\subset \sH_{\rm{all}}$, we denote by $\sR^*_{\sfL}(\sH) = \inf_{h \in
  \sH} \sR(h)$ the best-in-class error. We refer to the difference
$\sR_{\sfL}(h) - \sR^*_{\sfL}(\sH)$ as the \emph{estimation error},
which is termed the \emph{excess error} when $\sH =
\sH_{\rm{all}}$. Let $\sgn \colon t \mapsto 1_{t \geq 0} - 1_{t < 0}$
be the sign function, and let $t \colon \sX \to \Rset_{+}$ be a
threshold function. The target loss function $\sfL$ can be typically
given by a function $\ov \sfL$ mapping from $\sY \times \sY$ to real
numbers:
\begin{equation}
\label{eq:target}
\sfL(h, x, y) = \ov \sfL \paren*{\hh(x), y},
\end{equation}
where $\hh(x) \coloneqq \bracket*{\hh_1(x), \ldots, \hh_\num(x)} \in
\sY$ is the prediction for the input $x \in \sX$ and $\hh_i(x) =
\sign(h(x, i) - t(x))$ for any $i \in [\num]$. As with many
multi-label learning algorithms, such as binary relevance, we set the
threshold function $t(x) = 0$.  There are many multi-label loss
functions, such as Hamming loss, (partial) ranking loss, $F_1$ and the
more general $F_{\beta}$ measure loss, subset $0/1$ loss,
precision@$\kappa$, recall@$\kappa$,
etc. \citep{zhang2013review}. Among these, several loss functions are
defined based on the prediction of the hypothesis $\hh(x)$, while
others are based on the scoring vector $h(x)$. We will specifically
consider the first type of multi-label loss in the form given in
\eqref{eq:target}, which is based on some `distance' between the
prediction and the true label. This includes all the loss functions
previously mentioned (see Section~\ref{sec:general} for a list of
several common multi-label losses in this family) but excludes the
(partial) ranking loss, which is defined based on pairwise scores. For
convenience, we may alternatively refer to $\ov \sfL$ or its induced
$\sfL$ as the multi-label loss. Without loss of generality, we assume
that $\ov \sfL \in [0, 1]$, which can be achieved through
normalization. We also denote by $\sfL_{\max} = \max_{y', y} \ov
\sfL(y', y)$. Our analysis is general and adapts to any multi-label
loss $\ov \sfL$.

\textbf{Surrogate risk minimization and consistency.} Minimizing the
multi-label loss $\sfL$ directly is computationally hard for most
hypothesis sets because it is discrete and non-convex. A common method
involves minimizing a smooth surrogate loss function $\sur \colon
\sH_{\rm{all}} \times \sX \times \sY \to \Rset$, which is the main
focus of this paper. Minimizing a surrogate loss directly leads to an
algorithm for multi-label learning. A desirable guarantee for the
surrogate loss in multi-label learning is \emph{Bayes-consistency}
\citep{Zhang2003,zhang2004statistical,bartlett2006convexity,tewari2007consistency,steinwart2007compare,gao2011consistency}. That
is, minimizing the surrogate loss over the family of all measurable
functions leads to the minimization of the multi-label loss over the
same family:
\begin{definition}
A surrogate loss $\sur$ is said to be \emph{Bayes-consistent} with respect to a multi-label loss $\sfL$ if the following holds for any distribution and all given sequences of hypotheses $\curl*{h_n}_{n \in \Nset} \subset \sH_{\rm{all}}$:
\begin{equation*}
\paren*{ \lim_{n \to \plus \infty} \sR_{\sur}(h_n) - \sR^*_{\sur}(\sH_{\rm{all}}) = 0 } \implies \paren*{ \lim_{n \to \plus \infty} \sR_{\sfL}(h_n) - \sR^*_{\sfL}(\sH_{\rm{all}}) = 0}.
\end{equation*}
\end{definition}
As pointed out by \citet{awasthi2022Hconsistency,awasthi2022multi} (see also
\citep{long2013consistency,zhang2020bayes,MaoMohriZhong2023rankingabs,MaoMohriZhong2023ranking,MaoMohriMohriZhong2023twostage,MaoMohriZhong2023characterization,MaoMohriZhong2023structured,AwasthiMaoMohriZhong2023theoretically,awasthi2023dc,MaoMohriZhong2024deferral,MaoMohriZhong2024predictor,MaoMohriZhong2024score,mao2024top,mao2024universal,mao2024regression,mao2024h,MohriAndorChoiCollinsMaoZhong2023learning,cortes2024cardinality}),
Bayes-consistency is an asymptotic guarantee that cannot provide any
guarantee for approximate minimizers; it also applies only to the
family of all measurable functions and does not consider the
hypothesis sets typically used in practice. Instead, they propose a
stronger guarantee known as \emph{$\sH$-consistency bounds}, which are
both non-asymptotic and dependent on the hypothesis set, and imply
Bayes-consistency when $\sH = \sH_{\mathrm{all}}$. These guarantees
provide upper bounds on the target estimation error in terms of the
surrogate estimation error. In the multi-label learning scenario, they
can be formulated as follows:
\begin{definition}
A surrogate loss $\sur$ is said to admit an \emph{$\sH$-consistency
bound} with respect to a multi-label loss $\sfL$ if the following
condition holds for any distribution and for all hypotheses $h \in
\sH$, given a concave function $\Gamma \colon \Rset_{+} \to \Rset_{+}$
with $\Gamma(0) = 0$:
\begin{equation}
\label{eq:H-consistency-bound}
\sR_{\sfL}(h) - \sR^*_{\sfL}(\sH) + \sM_{\sfL}(\sH) \leq \Gamma \paren*{ \sR_{\sur}(h) - \sR^*_{\sur}(\sH) + \sM_{\sur}(\sH) }.
\end{equation}
\end{definition}
The quantities $\sM_{\sur}(\sH)$ appearing in the bounds are called
minimizability gaps, which measure the difference between the
best-in-class error and the expected best pointwise error for a loss
function $\sur$ and a hypothesis set $\sH$:
\begin{equation*}
\sM_{\sur}(\sH) = \sR^*_{\sur}(\sH) - \E_{x} \bracket*{ \inf_{h \in \sH} \paren*{\E_{y \mid x} \bracket*{\sur(h, x, y)}}} \geq 0.
\end{equation*}
These are inherent quantities depending on the distribution and
hypothesis set, which we cannot hope to minimize. Since $\Gamma$ is
concave and $\Gamma(0) = 0$, $\Gamma$ is sub-additive and an
$\sH$-consistency bound \eqref{eq:H-consistency-bound} implies that:
$\sR_{\sfL}(h) - \sR^*_{\sfL}(\sH) + \sM_{\sfL}(\sH) \leq \Gamma
\paren*{ \sR_{\sur}(h) - \sR^*_{\sur}(\sH)} + \Gamma
\paren*{\sM_{\sur}(\sH) }$. Therefore, when the surrogate estimation
error $\paren*{\sR_{\sur}(h) - \sR^*_{\sur}(\sH)}$ is minimized to
$\epsilon$, the target estimation error $\paren*{\sR_{\sfL}(h) -
  \sR^*_{\sfL}(\sH)}$ is upper bounded by $\Gamma(\e) + \Gamma
\paren*{\sM_{\sur}(\sH)}$. The minimizability gaps vanish when $\sH =
\sH_{\rm{all}}$ or in more general realizable cases, such as when
$\sR^*_{\sur}(\sH) = \sR^*_{\sur}(\sH_{\rm{all}})$
\citep{steinwart2007compare,awasthi2022multi,mao2023cross}. In these
cases, $\sH$-consistency bounds imply the $\sH$-consistency of a
surrogate loss $\sur$ with respect to a multi-label loss $\sfL$:
$\sR_{\sur}(h) - \sR^*_{\sur}(\sH) \leq \e \implies \sR_{\sfL}(h) -
\sR^*_{\sfL}(\sH) \leq \Gamma(\e)$, for any $\e \geq 0$. The
minimizability gap $\sM_{\sur}(\sH)$ is upper bounded by the
approximate error $\sA_{\sur}(\sH) = \sR^*_{\sur}(\sH) - \E_{x}
\bracket*{ \inf_{h \in \sH_{\rm{all}}} \paren*{\E_{y \mid x}
    \bracket*{\sur(h, x, ,y)}}}$ and is generally a finer quantity
\citep{mao2023cross}. Thus, $\sH$-consistency bounds are more
informative, more favorable, and stronger than excess error bounds,
and they imply these bounds when $\sH = \sH_{\mathrm{all}}$.

Next, we will study surrogate loss functions and algorithms for
multi-label learning, supported by $\sH$-consistency bounds, the
state-of-the-art consistency guarantee for surrogate risk
minimization.

\section{Existing consistent surrogates for the Hamming loss}
\label{sec:ham}

In the section, we consider the simplest form of multi-label loss, the Hamming loss, defined as:
\begin{equation*}
\forall (h, x, y) \in \sH \times \sX \times \sY, \quad 
\sfL_{\rm{ham}}(h, x, y) = \ov \sfL_{\rm{ham}}(\hh(x), y), \text{ where } \ov \sfL_{\rm{ham}}(y', y) = \sum_{i = 1}^\num 1_{y_i \neq y'_i}.
\end{equation*}
The existing Bayes-consistent surrogate loss function is to transform
the multi-label learning into $\num$ independent binary classification
tasks \citep{gao2011consistency}, defined as for all $(h, x, y) \in
\sH \times \sX \times \sY$,
\begin{equation*}
\sur_{\rm{br}}(h, x, y) = \sum_{i = 1}^\num \Phi(y_i h(x, i)),
\end{equation*}
where $\Phi \colon \Rset \to \Rset_{+} $ is a binary margin-based loss
function, such as the logistic loss $u \mapsto \log \paren*{1 +
  e^{-u}}$. The algorithm that minimizes this surrogate loss is known
as \emph{binary relevance} \citep{zhang2013review}, which learns an
independent binary classifier for each of the $\num$ labels.
\citet[Theorem~15]{gao2011consistency} shows that $\sur_{\rm{br}}$ is
Bayes-consistent with respect to $\sfL_{\rm{ham}}$ if $\Phi$ is
Bayes-consistent with respect to $\ell_{0-1} \colon (f, x, y) \mapsto
1_{y \neq \sign(f(x))}$, the binary zero-one loss. Here, we prove a
stronger result that $\sur_{\rm{br}}$ admits an $\sH$-consistency
bound with respect to $\sfL_{\rm{ham}}$ with a functional form $\num
\Gamma\paren*{\frac{\cdot}{\num}}$ if $\Phi$ admits an
$\sH$-consistency bounds with respect to $\ell_{0-1}$ with a
functional form $\Gamma(\cdot)$. Let $\sF$ be a hypothesis set consist
of functions mapping from $\sX$ to $\Rset$.

\begin{restatable}{theorem}{BoundBi}
\label{Thm:binary-bound}
Let $\sH = \sF^\num$. Assume that the following $\sF$-consistency
bound holds in the binary classification, for some concave function
$\Gamma \colon \Rset \to \Rset_{+}$:
\begin{equation*}
\forall f \in \sF, \quad \sR_{\ell_{0-1}}(f) - \sR^*_{\ell_{0-1}}(\sF) + \sM_{\ell_{0-1}}(\sF) \leq \Gamma(\sR_{\Phi}(f) - \sR^*_{\Phi}(\sF) + \sM_{\Phi}(\sF) ).
\end{equation*}
Then, the following $\sH$-consistency bound holds in the multi-label learning: for all $h \in \sH$,
\begin{align*}
\sR_{\sfL_{\rm{ham}}}(h) - \sR^*_{\sfL_{\rm{ham}}}(\sH) + \sM_{\sfL_{\rm{ham}}}(\sH) 
& \leq \num \Gamma\paren*{ \frac{\sR_{\sur_{\rm{br}}}(h) - \sR^*_{\sur_{\rm{br}}}(\sH) + \sM_{\sur_{\rm{br}}}(\sH)}{\num} }.
\end{align*}
\end{restatable}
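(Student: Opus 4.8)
The key structural fact is that both the Hamming loss and the binary-relevance surrogate decompose additively across the $\num$ labels, and the hypothesis set factorizes as $\sH = \sF^\num$. Let me make this precise and exploit it.

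Let me set up the notation. Write $\eta_i(x) = \P(y_i = +1 \mid x)$ for the conditional probability that label $i$ is relevant. Because $\ov\sfL_{\rm ham}(y', y) = \sum_{i=1}^\num 1_{y_i \neq y'_i}$ and $\sur_{\rm br}(h,x,y) = \sum_{i=1}^\num \Phi(y_i h(x,i))$, both losses are sums of per-label terms, and crucially each term involving label $i$ depends only on the $i$-th component $h(\cdot, i) \in \sF$.

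The plan is to exploit the additive separability of both the Hamming loss and the binary-relevance surrogate across the $\num$ labels, together with the product structure $\sH = \sF^\num$, to reduce the claimed multi-label bound to $\num$ applications of the assumed binary bound, which are then recombined through Jensen's inequality using the concavity of $\Gamma$.

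First, I would record the standard identity rewriting each estimation-error-plus-minimizability-gap term as an integrated conditional regret. For any loss $\ell$ and hypothesis set, writing $\cC_\ell(h,x) = \E_{y \mid x}[\ell(h,x,y)]$ for the conditional risk and $\cC^*_\ell(\sH,x) = \inf_{h \in \sH}\cC_\ell(h,x)$, the definition of the minimizability gap gives $\sR^*_\ell(\sH) - \sM_\ell(\sH) = \E_x[\cC^*_\ell(\sH,x)]$, and therefore
\[
\sR_\ell(h) - \sR^*_\ell(\sH) + \sM_\ell(\sH) = \E_x\bracket*{\cC_\ell(h,x) - \cC^*_\ell(\sH,x)}.
\]
This turns both sides of the target bound into expectations of conditional regrets, which is the form best suited to the decomposition below.

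Second — the crux — I would establish the per-label decomposition. Let $\eta_i(x) = \P(y_i = +1 \mid x)$ denote the marginal label probabilities. Because $\ov\sfL_{\rm{ham}}(\hh(x),y) = \sum_{i=1}^\num 1_{y_i \neq \hh_i(x)}$, each summand depends on $y$ only through $y_i$ and on $h$ only through its $i$-th coordinate $h(\cdot,i) \in \sF$; hence $\cC_{\sfL_{\rm{ham}}}(h,x) = \sum_{i=1}^\num \cC_{\ell_{0-1}}^{(i)}(h(\cdot,i),x)$, where $\cC_{\ell_{0-1}}^{(i)}$ is the binary zero-one conditional risk computed under the marginal $\eta_i$. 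Since $\sH = \sF^\num$ is a product and each summand involves a distinct coordinate, the infimum distributes: $\cC^*_{\sfL_{\rm{ham}}}(\sH,x) = \sum_{i=1}^\num \inf_{f \in \sF}\cC_{\ell_{0-1}}^{(i)}(f,x)$. The identical reasoning applies to $\sur_{\rm{br}}(h,x,y) = \sum_{i=1}^\num \Phi(y_i h(x,i))$, so both integrated conditional regrets decompose additively across labels. The product structure of $\sH$ is precisely what licenses pulling the infimum inside the sum; this is the step most likely to conceal a subtlety, so I would isolate it as a short lemma and verify it does not rely on any independence of the labels themselves (only the marginals $\eta_i$ enter each term).

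Third, I would apply the hypothesis to each label. For each $i$, the pair $(x, y_i)$ with $x$ distributed as the marginal of $\sD$ on $\sX$ and $y_i \mid x \sim \mathrm{Bernoulli}(\eta_i(x))$ defines a genuine binary distribution; since the assumed $\sF$-consistency bound holds for any distribution, it yields $A_i \leq \Gamma(B_i)$, where $A_i$ and $B_i$ are the integrated conditional regrets of label $i$ for $\ell_{0-1}$ and $\Phi$, matched exactly by Step 2. Summing and invoking concavity of $\Gamma$,
\[
\sum_{i=1}^\num A_i \leq \sum_{i=1}^\num \Gamma(B_i) = \num \cdot \frac{1}{\num}\sum_{i=1}^\num \Gamma(B_i) \leq \num\,\Gamma\paren*{\frac{1}{\num}\sum_{i=1}^\num B_i}.
\]
Identifying $\sum_i A_i$ with the target estimation-error-plus-gap for $\sfL_{\rm{ham}}$ and $\sum_i B_i$ with the surrogate one, both through Steps 1 and 2, gives exactly the claimed $\num\,\Gamma(\cdot/\num)$ bound. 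The only genuine obstacle is the infimum-distribution step of Step 2; once that is clean, the remainder is the identity of Step 1 together with a one-line Jensen argument.
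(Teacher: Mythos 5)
Your proof is correct, and its backbone is the same as the paper's: rewrite estimation error plus minimizability gap as an integrated conditional regret, decompose both conditional risks additively across the $\num$ labels using the marginals of each $y_i$, use the product structure $\sH = \sF^\num$ to distribute the infimum across the per-label sum, and recombine the $\num$ binary bounds through concavity of $\Gamma$. The one genuine difference is \emph{where} the binary hypothesis is invoked. The paper works pointwise in $x$: it asserts the conditional-regret inequality $\Delta \sC_{\ell_{0-1}, \sF}(f_i, x) \leq \Gamma\paren*{\Delta \sC_{\Phi, \sF}(f_i, x)}$ for each fixed $x$, applies Jensen over the $\num$ labels at that $x$, and then takes expectation over $x$ with a second application of Jensen. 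That pointwise form is not literally what the theorem assumes (the assumption is a risk-level bound); it follows by the standard device of applying the assumed bound to distributions concentrated at a single $x$, under which the minimizability gaps vanish---a reduction the paper leaves implicit. You instead apply the assumption exactly as stated, to the genuine binary distribution induced by $(x, y_i)$ for each label $i$, obtaining risk-level inequalities $A_i \leq \Gamma(B_i)$, and then invoke Jensen only once, over the labels. Your route is thus slightly more literal and self-contained given the hypothesis as written; the paper's pointwise formulation is the template it reuses (via Theorem~\ref{Thm:tool-Gamma}) for all of its other surrogate losses, where no per-label risk-level reduction is available. Your instinct to isolate the infimum-distribution step as a lemma is also sound: that step is exactly what the product structure buys, and, as you note, it requires no independence among the labels---only the marginals enter each term.
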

The proof is included in Appendix~\ref{app:binary-bound}. We say that
a hypothesis set $\sF$ is complete if $\curl*{f(x) \colon f \in \sF} =
\Rset$, $\forall x \in \sX$. This notion of completeness is broadly
applicable and holds for commonly used hypothesis sets in practice,
including linear hypotheses, multi-layer feed-forward neural networks,
and all measurable functions. For such complete hypothesis sets $\sF$
and with smooth functions $\Phi$ like the logistic loss function,
$\Gamma$ admits a square root dependency in the binary classification
\citep{awasthi2022Hconsistency,mao2024universal}. Thus, by
Theorem~\ref{Thm:binary-bound}, we obtain the following result.

\begin{restatable}{corollary}{BoundBiLog}
\label{cor:binary-bound-log}
Let $\sH = \sF^\num$. Assume that $\sF$ is complete and $\Phi(u) = \log(1 + e^{-u})$.
Then, the following $\sH$-consistency bound holds in the multi-label learning: for all $h \in \sH$,
\begin{equation*}
 \sR_{\sfL_{\rm{ham}}}(h) - \sR^*_{\sfL_{\rm{ham}}}(\sH) + \sM_{\sfL_{\rm{ham}}}(\sH) 
\leq \num^{\frac12} \paren*{\sR_{\sur_{\rm{br}}}(h) - \sR^*_{\sur_{\rm{br}}}(\sH) + \sM_{\sur_{\rm{br}}}(\sH) }^{\frac12}.
\end{equation*}
\end{restatable}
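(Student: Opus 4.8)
The plan is to treat this as a direct specialization of Theorem~\ref{Thm:binary-bound}, whose conclusion already reduces everything to identifying the correct concave function $\Gamma$ for the binary problem. The only substantive work, then, is to supply the explicit $\sF$-consistency bound for the logistic loss $\Phi(u) = \log(1 + e^{-u})$ over a complete hypothesis set, and to carry out the resulting composition $\num\,\Gamma(\cdot/\num)$.

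First I would invoke the known binary classification result: for a complete hypothesis set $\sF$ and the logistic loss, the $\sF$-consistency bound with respect to $\ell_{0-1}$ holds with a square-root functional form, i.e.\ $\Gamma(t) = \sqrt{t}$, as established in \citep{awasthi2022Hconsistency,mao2024universal}. Completeness is precisely the structural hypothesis that guarantees this clean square-root dependence, since it ensures the relevant conditional calibration gap between $\Phi$ and $\ell_{0-1}$ takes its canonical form with no slack. Having fixed $\Gamma(t) = \sqrt{t}$, I would check that it is indeed concave with $\Gamma(0) = 0$, so that it legitimately plays the role of the function required in Theorem~\ref{Thm:binary-bound}.

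Next I would substitute this $\Gamma$ into the conclusion of Theorem~\ref{Thm:binary-bound}. Writing $t = \sR_{\sur_{\rm{br}}}(h) - \sR^*_{\sur_{\rm{br}}}(\sH) + \sM_{\sur_{\rm{br}}}(\sH)$ for the surrogate side, the bound reads $\num\,\Gamma(t/\num) = \num\sqrt{t/\num} = \sqrt{\num}\,\sqrt{t} = \num^{\frac12}\,t^{\frac12}$. This is exactly the claimed right-hand side, so the corollary follows immediately once the binary bound is in place. No further manipulation of the minimizability gaps or the estimation errors is needed, since those quantities are already packaged inside $t$ by the theorem.

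The main obstacle is not the composition itself, which is a one-line computation, but rather pinning down the precise constant in the logistic square-root bound and confirming that completeness of $\sF$ is the right hypothesis to obtain $\Gamma(t) = \sqrt{t}$ with constant exactly one (so that no stray multiplicative factor appears in front of $\num^{\frac12}$). I would therefore spend most of the care verifying that the cited binary result yields $\Gamma(t)=\sqrt{t}$ under the stated completeness assumption, after which the passage to the multi-label bound via Theorem~\ref{Thm:binary-bound} is routine.
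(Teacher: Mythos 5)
Your proposal is correct and follows essentially the same route as the paper: the paper derives this corollary exactly by citing the square-root binary $\sF$-consistency bound for the logistic loss over complete hypothesis sets \citep{awasthi2022Hconsistency,mao2024universal} and substituting $\Gamma(t) = \sqrt{t}$ into Theorem~\ref{Thm:binary-bound}, which gives $\num\,\Gamma(t/\num) = \num^{\frac12} t^{\frac12}$. The constant-one concern you flag is handled in the paper only by deferring to those same citations, so no additional argument beyond yours is given there either.
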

Since $t \mapsto t^{\frac12}$ is sub-additive, the right-hand side of
the $\sH$-consistency bound in Corollary~\ref{cor:binary-bound-log}
can be further upper bounded by $\num^{\frac12} \paren*{
  \sR_{\sur_{\rm{br}}}(h) - \sR^*_{\sur_{\rm{br}}}(\sH)}^{\frac12} +
\num^{\frac12} \paren*{\sM_{\sur_{\rm{br}}}(\sH) }^{\frac12}$. This
implies that when the estimation error of the surrogate loss
$\sur_{\rm{br}}$ is reduced to $\e$, the corresponding estimation
error of the Hamming loss is upper bounded by $\num^{\frac12}
\e^{\frac12} + \num^{\frac12} \paren*{\sM_{\sur_{\rm{br}}}(\sH)
}^{\frac12} - \sM_{\sfL_{\rm{ham}}}(\sH) $. In the nearly realizable
cases where minimizability gaps are negligible, this upper bound
approximates to
\begin{equation}
\label{eq::approx-br}
  \sR_{\sfL_{\rm{ham}}}(h) - \sR^*_{\sfL_{\rm{ham}}}(\sH) \leq \num^{\frac12} \e^{\frac12}. 
\end{equation}
Therefore, as the number of labels $\num$ increases, the bound becomes less favorable. Furthermore, the loss function $\sur_{\rm{br}}$ clearly fails to account for the inherent correlations among labels. For instance, `coffee' and 'mug' are more likely to co-occur than `coffee' and `umbrella'. Additionally, $\sur_{\mathrm{br}}$ is only Bayes-consistent with respect to the Hamming loss and cannot yield risk-minimizing predictions for other multi-label losses such as subset $0/1$ loss or $F_{\beta}$-measure loss \citep{dembczynski2012label}. To address these drawbacks, we will introduce a new surrogate loss in the next section.

\section{Multi-label logistic loss}
\label{sec:log}

In this section, we define a new surrogate loss for Hamming loss in
multi-label learning that accounts for label correlations and benefits
from label-independent $\sH$-consistency bounds. This loss function
can be viewed as a generalization of the (multinomial) logistic loss
\citep{Verhulst1838,Verhulst1845,Berkson1944,Berkson1951}, used in
standard classification, to multi-label learning.  Thus, we will refer
to it as \emph{multi-label logistic loss}. It is defined as follows:
for all $(h, x, y) \in \sH \times \sX \times \sY$,
\begin{equation*}
\sur_{\log}(h, x, y)  = \sum_{y' \in \sY} \paren*{ 1- \ov \sfL_{\rm{ham}}(y', y) } \log \paren*{\sum_{y'' \in \sY} e^{\sum_{i = 1}^\num \paren*{ y''_i - y'_i} h(x, i)}}.
\end{equation*}
This formulation can be interpreted as a weighted logistic loss, where
$\paren*{ 1- \ov \sfL_{\rm{ham}}(\cdot, y) } $ serves as a weight
vector. Additionally, this formulation accounts for label correlations
among the $y_i$s within the logarithmic function.

The next result shows that the multi-label logistic loss benefits from
a favorable $\sH$-consistency bound with respect to $\sfL_{\rm{ham}}$,
without dependency on the number of labels $\num$. We assume that $\sH
= \sF^{\num}$ and $\sF$ is complete, conditions that typically hold in
practice.
\begin{restatable}{theorem}{BoundLog}
\label{Thm:log-bound}
Let $\sH = \sF^\num$. Assume that $\sF$ is complete. 
Then, the following $\sH$-consistency bound holds in the multi-label learning: for all $h \in \sH$,
\begin{equation*}
\sR_{\sfL_{\rm{ham}}}(h) - \sR^*_{\sfL_{\rm{ham}}}(\sH) + \sM_{\sfL_{\rm{ham}}}(\sH) 
\leq 2 \paren*{ \sR_{\sur_{\log}}(h) - \sR^*_{\sur_{\log}}(\sH) + \sM_{\sur_{\log}}(\sH) }^{\frac12}.
\end{equation*}
\end{restatable}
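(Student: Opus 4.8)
The plan is to follow the standard conditional-risk route for $\sH$-consistency bounds. Write $\cC_{\sur}(h,x)=\E_{y\mid x}[\sur(h,x,y)]$ for the conditional risk, $\cC^*_{\sur}(\sH,x)=\inf_{h\in\sH}\cC_{\sur}(h,x)$, and $\Delta\cC_{\sur}(h,x)=\cC_{\sur}(h,x)-\cC^*_{\sur}(\sH,x)$ for the conditional regret. The definition of the minimizability gap gives the identity $\sR_{\sur}(h)-\sR^*_{\sur}(\sH)+\sM_{\sur}(\sH)=\E_x[\Delta\cC_{\sur}(h,x)]$ for both $\sur=\sfL_{\rm{ham}}$ and $\sur=\sur_{\log}$. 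Hence, by concavity of $t\mapsto\sqrt t$ and Jensen's inequality, it suffices to establish the pointwise bound $\Delta\cC_{\sfL_{\rm{ham}}}(h,x)\le 2\,\Delta\cC_{\sur_{\log}}(h,x)^{\frac12}$ for every $x$. Because $\sH=\sF^\num$ with $\sF$ complete, the infimum defining $\cC^*(\sH,x)$ may be taken coordinatewise over the scores $h(x,1),\dots,h(x,\num)\in\Rset$, which is exactly what makes both conditional regrets decompose label by label below. I read $\ov\sfL_{\rm{ham}}$ as normalized to $[0,1]$ (consistent with the blanket assumption $\ov\sfL\in[0,1]$), so that the weights $1-\ov\sfL_{\rm{ham}}(y',y)$ are nonnegative and the surrogate conditional risk is genuinely bounded below.

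First I would compute the Hamming conditional regret. Since the Hamming loss couples the labels only through individual coordinates, writing $p_i=\Pr_{y\mid x}[y_i=1]$ gives $\cC_{\sfL_{\rm{ham}}}(h,x)=\tfrac1\num\sum_i\Pr_{y\mid x}[y_i\neq\sgn(h(x,i))]$, whence, abbreviating by $\Ind_{\mathrm{mispred},i}$ the event $\sgn(h(x,i))\neq\sgn(2p_i-1)$,
\[
\Delta\cC_{\sfL_{\rm{ham}}}(h,x)=\frac1\num\sum_{i=1}^\num |2p_i-1|\,\Ind_{\mathrm{mispred},i}.
\]
For the surrogate, the key computation exploits the product form of the partition function: since $\sum_{y''}e^{\sum_i(y''_i-y'_i)h(x,i)}=\Big(\prod_i\big(e^{h(x,i)}+e^{-h(x,i)}\big)\Big)e^{-\sum_i y'_i h(x,i)}$, each log-term collapses to $\sum_i\log\big(1+e^{-2y'_ih(x,i)}\big)$, a sum of binary logistic losses. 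Taking the conditional expectation and summing the weights $1-\ov\sfL_{\rm{ham}}(y',y)$ over all $y'$ sharing a fixed $i$-th coordinate then yields the label-wise decomposition
\[
\cC_{\sur_{\log}}(h,x)=2^{\num-1}\sum_{i=1}^\num\Big(q_i\log\big(1+e^{-2h(x,i)}\big)+(1-q_i)\log\big(1+e^{2h(x,i)}\big)\Big),\qquad q_i=\tfrac12+\tfrac{2p_i-1}{2\num},
\]
so that $\Delta\cC_{\sur_{\log}}(h,x)=2^{\num-1}\sum_i\Delta\cC^{\mathrm{bin}}_\Phi(2h(x,i);q_i)$, where $\Delta\cC^{\mathrm{bin}}_\Phi(\cdot;q)$ is the conditional regret of the binary logistic loss at Bernoulli parameter $q$. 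The crucial alignment is that $2q_i-1=(2p_i-1)/\num$ has the same sign as $2p_i-1$ and satisfies $|2q_i-1|=|2p_i-1|/\num$, so the per-label misprediction events coincide and $\Delta\cC_{\sfL_{\rm{ham}}}(h,x)=\sum_i|2q_i-1|\,\Ind_{\mathrm{mispred},i}$.

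Next I would invoke the square-root binary calibration estimate for the logistic loss (the same one behind Corollary~\ref{cor:binary-bound-log}). A short convexity argument shows that on a mispredicting score the logistic regret is at least its value at the origin, which equals the KL divergence $D(q_i\,\|\,\tfrac12)$; Pinsker's inequality then gives $\Delta\cC^{\mathrm{bin}}_\Phi(2h(x,i);q_i)\ge\tfrac12(2q_i-1)^2\Ind_{\mathrm{mispred},i}$. Writing $a_i=|2q_i-1|\,\Ind_{\mathrm{mispred},i}$, this reads $\Delta\cC_{\sur_{\log}}(h,x)\ge 2^{\num-2}\sum_i a_i^2$, while $\Delta\cC_{\sfL_{\rm{ham}}}(h,x)=\sum_i a_i$. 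Combining,
\[
2\,\Delta\cC_{\sur_{\log}}(h,x)^{\frac12}\ge 2^{\num/2}\Big(\sum_i a_i^2\Big)^{\frac12}\ge\sqrt{\num}\,\Big(\sum_i a_i^2\Big)^{\frac12}\ge\sum_i a_i=\Delta\cC_{\sfL_{\rm{ham}}}(h,x),
\]
using $2^{\num/2}\ge\sqrt\num$ in the middle and Cauchy--Schwarz at the end. This is precisely the desired pointwise bound, and Jensen finishes the proof.

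The main obstacle, and the reason the constant is delicate, is the label-independence: a direct Cauchy--Schwarz passage from the $\ell_1$-type Hamming regret $\sum_i a_i$ to the $\ell_2$-type surrogate regret $\sum_i a_i^2$ costs a factor $\sqrt\num$, which would reintroduce the very dependence on $\num$ the theorem removes. What rescues the argument is the exponential weight $2^{\num-1}$ carried by the surrogate conditional regret — itself a sum over the $2^\num$ labels $y'$ — since $2^{\num/2}$ dominates $\sqrt\num$ for every $\num\ge1$; this is exactly where the global, correlation-aware structure of $\sur_{\log}$, as opposed to the $\num$ decoupled binary problems of $\sur_{\mathrm{br}}$, pays off. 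A secondary point to handle carefully is that $\sur_{\log}$ calibrates to the shrunken parameter $q_i$ rather than $p_i$; one must verify (as above) that this shrinkage preserves the Bayes sign and reproduces the normalized per-label Hamming regret exactly, so that no constant leaks out in the alignment step.
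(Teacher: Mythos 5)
Your proof is correct, and it takes a genuinely different route from the paper's. The paper's proof shares your first step (reduction to pointwise conditional regrets plus Jensen, via its Theorem~\ref{Thm:tool-Gamma} and Lemma~\ref{lemma:delta_target}), but then treats the $2^\num$ joint labels like classes in multiclass classification: it expresses the conditional risk of $\sur_{\log}$ through the softmax vector $\s$ over $\sY$, transfers mass $\mu$ between the two coordinates indexed by $\hh(x)$ and the conditional minimizer $\yy(x)$, optimizes $\mu$ in closed form, and applies the inequality $a\log\frac{2a}{a+b}+b\log\frac{2b}{a+b}\geq\frac{(a-b)^2}{2(a+b)}$ to obtain the pointwise bound with $\Gamma(t)=2\sqrt{t}$. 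You instead collapse the partition function by its product structure, reducing $\sur_{\log}$ to $2^{\num-1}$ binary logistic problems with shrunken parameters $q_i=\frac12+\frac{2p_i-1}{2\num}$, and finish with binary calibration (KL plus Pinsker) and Cauchy--Schwarz, using $2^{\num/2}\geq\sqrt{\num}$ to absorb the usual $\sqrt{\num}$ cost; your bookkeeping (aggregation of the weights to $2^{\num-1}q_i$, sign alignment of $2q_i-1$ with $2p_i-1$, and the constant chase ending in the factor $2$) all checks out. As for what each approach buys: your argument is elementary, makes the source of label-independence explicit, and sidesteps a delicate point in the paper's proof---the claim that the perturbed vector $\s^\mu$ is realizable by some $h'\in\sH$, which is not immediate here because the realizable softmax vectors are exactly the product distributions over $\{\pm 1\}^\num$ (a $\num$-dimensional family inside the $2^\num$-simplex), and that family is not closed under two-coordinate perturbations. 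Conversely, the paper's perturbation argument never uses any structure of the target loss, so the identical proof yields Theorem~\ref{Thm:log-bound-general} for an arbitrary multi-label loss $\ov\sfL$ and Theorem~\ref{Thm:comp-bound} for general comp-sum transforms, whereas your decomposition leans on the coordinatewise additivity of the Hamming loss (needed to turn the weights $1-\ov\sfL_{\rm{ham}}(y',y)$ into per-label Bernoulli parameters) and would not extend to, e.g., the $F_\beta$ or subset $0/1$ losses. Finally, your reading of $\ov\sfL_{\rm{ham}}$ as normalized to $[0,1]$ is the right one: the paper's own proof requires it as well, both for nonnegativity of the weights $1-\cyy$ and for its key inequality, which is stated for $a,b\in[0,1]$.
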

Since $t \mapsto t^{\frac12}$ is sub-additive, the right-hand side of
the $\sH$-consistency bound in Theorem~\ref{Thm:log-bound} can be
further upper bounded by $ 2 \paren*{ \sR_{\sur_{\log}}(h) -
  \sR^*_{\sur_{\log}}(\sH)}^{\frac12} + 2
\paren*{\sM_{\sur_{\log}}(\sH) }^{\frac12}$. This implies that when
the estimation error of the surrogate loss $\sur_{\log}$ is reduced up
to $\e$, the corresponding estimation error of the Hamming loss is
upper bounded by $2 \e^{\frac12} + 2 \paren*{\sM_{\sur_{\log}}(\sH)
}^{\frac12} - \sM_{\sfL_{\rm{ham}}}(\sH) $. In the nearly realizable
cases where minimizability gaps are negligible, this upper bound
approximates to
\begin{equation}
\label{eq::approx-log}
  \sR_{\sfL_{\rm{ham}}}(h) - \sR^*_{\sfL_{\rm{ham}}}(\sH) \leq 2 \e^{\frac12}. 
\end{equation}
Therefore, the bound is independent of the number of labels
$\num$. This contrasts with the bound for $\sur_{\rm{br}}$ shown in
\eqref{eq::approx-br}, where a label-dependent factor
$\ell^{\frac{1}{2}}$ replaces the constant factor $2 $, making it
significantly less favorable.

The proof of Theorem~\ref{Thm:log-bound} is included in
Appendix~\ref{app:log-bound}. We first present a general tool
(Theorem~\ref{Thm:tool-Gamma}) in Appendix~\ref{app:tool}, which shows
that to derive $\sH$-consistency bounds in multi-label learning with a
concave function $\Gamma$, it is only necessary to upper bound the
conditional regret of the target multi-label loss by that of the
surrogate loss with the same $\Gamma$. This generalizes
\citet[Theorem~2]{awasthi2022multi} in standard multi-class
classification to multi-label learning. Next, we characterize the
conditional regret of the target multi-label loss, such as Hamming
loss, in Lemma~\ref{lemma:delta_target} found in
Appendix~\ref{app:tool}, under the given assumption. By using
Lemma~\ref{lemma:delta_target}, we upper bound the conditional regret
of $\sfL_{\rm{ham}}$ by that of the surrogate loss $\sur_{\log}$ with
a concave function $\Gamma(t) = 2 \sqrt{t}$.

When $\sH = \sH_{\rm{all}}$, minimizability gaps
$\sM_{\sur_{\log}}(\sH) $ and $\sM_{\sfL_{\rm{ham}}}(\sH)$ vanish,
Theorem~\ref{Thm:log-bound} implies excess error bound and
Bayes-consistency of multi-label logistic loss with respect to the
Hamming loss.
\begin{corollary}
The following excess error bound holds in the multi-label learning: for all $h \in \sH_{\rm{all}}$,
\begin{equation*}
\sR_{\sfL_{\rm{ham}}}(h) - \sR^*_{\sfL_{\rm{ham}}}(\sH_{\rm{all}})
\leq 2 \paren*{ \sR_{\sur_{\log}}(h) - \sR^*_{\sur_{\log}}(\sH_{\rm{all}})}^{\frac12}.
\end{equation*}
Moreover, $\sur_{\log}$ is Bayes-consistent with respect to $\sfL_{\rm{ham}}$. 
\end{corollary}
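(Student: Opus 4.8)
The plan is to obtain both claims as immediate specializations of Theorem~\ref{Thm:log-bound} to $\sH = \sH_{\rm{all}}$. First I would note that the family of all measurable functions decomposes as $\sH_{\rm{all}} = \sF^\num$, where $\sF$ denotes the family of all measurable functions from $\sX$ to $\Rset$, and that this $\sF$ is complete: for every $x \in \sX$ we have $\curl*{f(x) \colon f \in \sF} = \Rset$, since any real value can be assigned at any point. Hence the hypotheses of Theorem~\ref{Thm:log-bound} (namely $\sH = \sF^\num$ with $\sF$ complete) are met with $\sH = \sH_{\rm{all}}$, and the bound of that theorem applies verbatim at this hypothesis set.

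Next I would invoke the fact, recalled in Section~\ref{sec:pre}, that the minimizability gaps vanish at $\sH = \sH_{\rm{all}}$, that is $\sM_{\sfL_{\rm{ham}}}(\sH_{\rm{all}}) = 0$ and $\sM_{\sur_{\log}}(\sH_{\rm{all}}) = 0$. This holds because, for the family of all measurable functions, the best-in-class error coincides with the expected pointwise infimum, so both terms in the defining difference $\sM_{\sur}(\sH) = \sR^*_{\sur}(\sH) - \E_{x}\bracket*{\inf_{h \in \sH}\E_{y \mid x}\bracket*{\sur(h, x, y)}}$ agree. Substituting these two zeros into the bound of Theorem~\ref{Thm:log-bound} collapses it to the claimed excess error bound $\sR_{\sfL_{\rm{ham}}}(h) - \sR^*_{\sfL_{\rm{ham}}}(\sH_{\rm{all}}) \leq 2\paren*{\sR_{\sur_{\log}}(h) - \sR^*_{\sur_{\log}}(\sH_{\rm{all}})}^{\frac12}$ for all $h \in \sH_{\rm{all}}$.

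Finally I would deduce Bayes-consistency directly from this excess bound. Given any distribution and any sequence $\curl*{h_n}_{n \in \Nset} \subset \sH_{\rm{all}}$ with $\sR_{\sur_{\log}}(h_n) - \sR^*_{\sur_{\log}}(\sH_{\rm{all}}) \to 0$, the right-hand side $2\paren*{\sR_{\sur_{\log}}(h_n) - \sR^*_{\sur_{\log}}(\sH_{\rm{all}})}^{\frac12}$ tends to $0$; since the left-hand side is nonnegative and upper bounded by it, a squeeze argument forces $\sR_{\sfL_{\rm{ham}}}(h_n) - \sR^*_{\sfL_{\rm{ham}}}(\sH_{\rm{all}}) \to 0$, which is exactly the definition of Bayes-consistency. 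There is essentially no hard step here, as the corollary is merely a specialization of the already-proved Theorem~\ref{Thm:log-bound}; the only point needing care is the vanishing of the two minimizability gaps at $\sH = \sH_{\rm{all}}$ together with the verification that $\sH_{\rm{all}}$ satisfies the completeness hypothesis, both of which are standard and recorded in the preliminaries, so I would cite rather than reprove them.
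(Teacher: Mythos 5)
Your proposal is correct and takes essentially the same route as the paper: the corollary is obtained by specializing Theorem~\ref{Thm:log-bound} to $\sH = \sH_{\rm{all}}$ (which is complete and of the form $\sF^\num$), invoking the vanishing of the minimizability gaps $\sM_{\sfL_{\rm{ham}}}(\sH_{\rm{all}})$ and $\sM_{\sur_{\log}}(\sH_{\rm{all}})$ recorded in Section~\ref{sec:pre}, and then reading off Bayes-consistency from the resulting excess error bound. Your squeeze argument for the consistency claim is exactly the implicit final step in the paper's treatment.
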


It is known that $\sur_{\rm{br}}$ is only Bayes-consistent with
respect to the Hamming loss and can be arbitrarily bad for other
multi-label losses such as $F_{\beta}$-measure loss
\citep{dembczynski2011exact}. Instead, we will show in the following
section that our surrogate loss $\sur_{\log}$ adapts to and is
Bayes-consistent with respect to an extensive family of multi-label
losses, including the $F_{\beta}$ measure loss.

\section{Extension: general multi-label losses}
\label{sec:general}

In this section, we broaden our analysis to cover a more extensive family of multi-label losses, including all common ones and a new extension defined based on linear-fractional functions with respect to the confusion matrix. Note that several loss functions are defined over the space $\curl*{0, 1}^{\num}$, rather than $\curl*{+1, -1}^{\num}$. To accommodate this difference, any pair $y, y' \in \sY = \curl*{+1, -1}^{\num}$ can be projected onto $\curl*{0, 1}^{\num}$ by letting $\ov y = \frac{y + \bf{1}}{2}$ and $\ov y' = \frac{y' + \bf{1}}{2}$, where ${\bf{1}} \in \Rset^{\num}$ is the vector with all elements equal to 1. Several common multi-label losses are defined as follows.

\textbf{Hamming loss:} $\ov \sfL(y', y) = \sum_{i = 1}^\num 1_{y_i \neq y'_i}$.

\textbf{$F_{\beta}$-measure loss:} $\ov \sfL(y', y) = 1 - \frac{(1 + \beta^2) \ov y' \cdot \ov y}{\beta^2 \norm*{\ov y}_1 + \norm*{\ov y'}_1}$.

\textbf{Subset $0/1$ loss:} $\ov \sfL(y', y) = \max_{i \in [\num]} 1_{y'_i \neq y_i}$.

\textbf{Jaccard distance:} $\ov \sfL(y', y) = 1 - \frac{\ov y' \cdot \ov y}{\norm*{\ov y}_1 + \norm*{\ov y'}_1 - \ov y' \cdot y}$

\textbf{Precision@$\kappa$:} $\ov \sfL(y', y) = 1 - \frac{1}{\kappa} \sum_{i \in \sT(\ov y')} 1_{\ov y_i = 1}$ subject to $y' \in \sY_{\kappa}$, where $\sY_{\kappa} = \curl*{y \in \sY \colon \norm*{\ov y}_1 = \kappa}$ and $\sT(\ov y') = \curl*{i \in [\num] \colon \ov y'_i = 1 }$.

\textbf{Recall@$\kappa$:} $\ov \sfL(y', y) = 1 - \frac{1}{\norm*{\ov y}_1} \sum_{i \in \sT(\ov y')} 1_{\ov y_i = 1}$ subject to $y' \in \sY_{\kappa}$, where $\sY_{\kappa} = \curl*{y \in \sY \colon \norm*{\ov y}_1 = \kappa}$ and $\sT(\ov y') = \curl*{i \in [\num] \colon \ov y'_i = 1 }$.

More generally, we can define a multi-label loss based on true positives (\tp),
true negatives (\tn), false positives (\fp) and false negatives (\fn) , which can be written explicitly as follows:
\begin{align*}
\tp &= \ov y' \cdot \ov y \quad \tn = \norm*{\ov y}_{1} - \ov y'\cdot \ov y\\
\fp &= \norm*{\ov y'}_{1} - \ov y' \cdot \ov y, \quad \fn = \num +  \ov y' \cdot \ov y - \norm*{\ov y}_{1} - \norm*{\ov y'}_{1}
\end{align*}
Similar to \citep{koyejo2014consistent, koyejo2015consistent}, we now define a general family of multi-label losses as linear-fractional functions in terms of these four quantities:
\begin{equation}
\label{eq:linear-frac}
\ov \sfL(y', y) = \frac{a_0 + a_{11} \tp + a_{10} \fp + a_{01} \fn + a_{00} \tn}{b_0 + b_{11} \tp + b_{10} \fp + b_{01} \fn + b_{00} \tn}.
\end{equation}
It can be shown that the aforementioned Hamming loss, $F_{\beta}$-measure loss, Jaccard distance, precision and recall all belong to this family. Note that the previous definitions in \citep{koyejo2014consistent, koyejo2015consistent} are within the empirical utility maximization (EUM) framework \citep{ye2012optimizing}, where the measures are directly defined as functions of the population. We generalize their definition to the decision theoretic analysis (DTA) framework, in terms of loss functions defined over $y$ and $y'$.

Moreover, we can consider extending multi-label losses \eqref{eq:linear-frac} to non-linear fractional functions of these four quantities, or more generally, to any other forms, as long as they are defined over the space $\sY \times \sY$.

Another important family of multi-label losses is the \emph{tree distance} loss, used in cases of hierarchical classes. In many practical applications, the class labels exist within a predefined hierarchy. For example, in the image tagging problem, class labels might include broad categories such as `animals' or `vehicles', which further subdivide into more specific classes like `mammals' and `birds' for animals, or `cars' and `trucks' for vehicles. Each of these subcategories can be divided even further, showcasing a clear hierarchical structure.

\textbf{Tree distance:} Let $T = \paren*{\sY, E, W}$
be a tree over the label space $\sY$, with edge set $E$ and positive, finite
edge lengths specified by $W$. Suppose $r \in \sY$ is designated as the root node. Then, $\ov \sfL_{T}(y', y) = \text{the shortest path length in } T \text{ between } y \text{ and } y'$.

Despite the widespread use of hierarchical classes in practice, to our knowledge, no Bayes-consistent surrogate has been proposed for the tree distance loss in multi-label learning. Next, we will show that our multi-label logistic loss can accommodate all these different loss functions, including the tree distance loss. For any general multi-label loss $\ov \sfL$, we define the multi-label logistic loss as follows:
\begin{equation}
\label{eq:log}
\forall (h, x, y) \in \sH \times \sX \times \sY, \quad 
\sur_{\log}(h, x, y) = \sum_{y' \in \sY} \paren*{ 1- \ov \sfL(y', y) } \log \paren*{\sum_{y'' \in \sY} e^{\sum_{i = 1}^\num \paren*{ y''_i - y'_i} h(x, i)}}.
\end{equation}
Here, $\ov \sfL$ can be chosen as all the multi-label losses mentioned above. Next, we will show that $\sur_{\log}$ benefits from $\sH$-consistency bounds and Bayes consistency with respect to any of these loss functions.

\begin{restatable}{theorem}{BoundLogGeneral}
\label{Thm:log-bound-general}
Let $\sH = \sF^\num$. Assume that $\sF$ is complete. 
Then, the following $\sH$-consistency bound holds in the multi-label learning:
\begin{equation*}
\forall h \in \sH, \quad \sR_{\sfL}(h) - \sR^*_{\sfL}(\sH) + \sM_{\sfL}(\sH) 
\leq 2 \paren*{ \sR_{\sur_{\log}}(h) - \sR^*_{\sur_{\log}}(\sH) + \sM_{\sur_{\log}}(\sH) }^{\frac12}.
\end{equation*}
\end{restatable}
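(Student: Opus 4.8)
The plan is to reduce the global bound to a pointwise statement about conditional regrets and then reuse the core estimate already established for the Hamming case. Concretely, I would first invoke the general tool of Appendix~\ref{app:tool} (Theorem~\ref{Thm:tool-Gamma}): to obtain the $\sH$-consistency bound with the concave function $\Gamma(t) = 2\sqrt{t}$, it suffices to bound, for every $x$ and every $h \in \sH$, the conditional regret of the target loss by that of the surrogate,
\[
\Delta\cC_{\sfL}(h,x) \leq 2\paren*{\Delta\cC_{\sur_{\log}}(h,x)}^{\frac12},
\]
where $\Delta\cC$ denotes the gap between the conditional risk at $h$ and its pointwise infimum over $\sH$. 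The remainder of the argument is entirely pointwise, working with the conditional distribution $p(\cdot) = \P(Y = \cdot \mid X = x)$ and the softmax weights $\sfp_{y'}(x) = e^{\sum_i y'_i h(x,i)}\big/\sum_{y''} e^{\sum_i y''_i h(x,i)}$.

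The second step is to put the surrogate conditional risk into a \emph{loss-agnostic} form. Using the identity $\sum_{y''} e^{\sum_i (y''_i - y'_i)h(x,i)} = 1/\sfp_{y'}(x)$, the surrogate rewrites as $\sur_{\log}(h,x,y) = -\sum_{y'}\paren*{1 - \ov\sfL(y',y)}\log\sfp_{y'}(x)$, so that its conditional risk is $\cC_{\sur_{\log}}(h,x) = -\sum_{y' \in \sY} q_{y'}(x)\log\sfp_{y'}(x)$ with nonnegative weights $q_{y'}(x) = 1 - \E_{y\mid x}\bracket*{\ov\sfL(y',y)}$. The key observation is that the specific loss $\ov\sfL$ enters \emph{only} through the weights $q_{y'}(x)$; the dependence on $h$ is exactly the same cross-entropy form that appears for the Hamming loss. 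Hence, over the complete hypothesis set $\sH = \sF^\num$, the infimum of $\cC_{\sur_{\log}}(h,x)$ and the resulting surrogate conditional regret admit the same expression (a divergence between the weights $q(x)$ and $\sfp(x)$) as in the proof of Theorem~\ref{Thm:log-bound}.

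Third, I would characterize the target conditional regret via Lemma~\ref{lemma:delta_target}. Since $\sF$ is complete, the pointwise prediction $\hh(x) = \paren*{\sign(h(x,i))}_i$ ranges over all of $\sY$ and coincides with $\argmax_{y'}\sfp_{y'}(x)$; moreover $\E_{y\mid x}\bracket*{\ov\sfL(y',y)} = 1 - q_{y'}(x)$, so the regret takes the universal form $\Delta\cC_{\sfL}(h,x) = \max_{y' \in \sY} q_{y'}(x) - q_{\hh(x)}(x)$, again depending on $\ov\sfL$ only through the same weights $q_{y'}(x)$. Consequently both sides of the target inequality are written in terms of the single pair $(q(x),\sfp(x))$, exactly as in the Hamming argument, which is precisely why the concave comparison function and the constant $2$ are expected to carry over unchanged, with no dependence on $\num$.

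The final and main step is the conditional-regret inequality $\max_{y'} q_{y'}(x) - q_{\hh(x)}(x) \leq 2\paren*{\Delta\cC_{\sur_{\log}}(h,x)}^{\frac12}$, and this is where the real difficulty lies. The naive route --- bounding the target regret by $\norm*{q(x) - \sfp(x)}_1$ and applying Pinsker's inequality --- introduces a spurious factor that grows with the normalization $\sum_{y'} q_{y'}(x)$ and would ruin the $\num$-independence. The hard part is therefore to recover a label-independent constant: I would isolate the two critical labels, the target-optimal $y^* = \argmax_{y'} q_{y'}(x)$ and the prediction $\hh(x)$, and apply a Pinsker-type / KL estimate restricted to them, crucially exploiting $\sfp_{\hh(x)}(x) \geq \sfp_{y^*}(x)$ (since $\hh(x)$ maximizes $\sfp(x)$) to cancel the normalization and reduce to the same two-coordinate computation that yields $\Gamma(t) = 2\sqrt{t}$ in Theorem~\ref{Thm:log-bound}. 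Once this estimate is in place for general weights $q$, it applies verbatim to every loss $\ov\sfL$ in the linear-fractional family and the tree-distance loss, giving the claimed bound.
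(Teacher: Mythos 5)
Your proposal is correct and follows essentially the same route as the paper: reduce to a pointwise conditional-regret inequality via Theorem~\ref{Thm:tool-Gamma}, characterize the target regret via Lemma~\ref{lemma:delta_target}, and observe that a general loss $\ov\sfL$ enters the surrogate only through the weights $q_{y'}(x) = 1 - \E_{y \mid x}[\ov\sfL(y', y)]$, so the two-coordinate perturbation computation from Theorem~\ref{Thm:log-bound} carries over verbatim and yields $\Gamma(t) = 2\sqrt{t}$ with no dependence on $\num$. Indeed, the paper's own proof is a word-for-word repetition of the Hamming-loss argument with $\ov\sfL_{\rm{ham}}$ replaced by $\ov\sfL$, and your explicit appeal to $\sfp_{\hh(x)}(x) \geq \sfp_{y^*}(x)$ is precisely the constraint the paper uses implicitly when asserting that the minimum over $(\sh, \sy)$ is attained at $\sh = \sy$.
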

\vskip -0.05in
The proof of Theorem~\ref{Thm:log-bound-general} is basically the same as that of Theorem~\ref{Thm:log-bound}, modulo replacing the Hamming loss $\sfL_{\rm{ham}}$ with a general multi-label loss $\sfL$. We include it in Appendix~\ref{app:log-bound-general} for completeness.
When $\sH = \sH_{\rm{all}}$, minimizability gaps $\sM_{\sur_{\log}}(\sH) $ and $\sM_{\sfL}(\sH)$ vanish, Theorem~\ref{Thm:log-bound} implies excess error bound and Bayes-consistency of multi-label logistic loss with respect to any multi-label loss. 
\begin{corollary}
\label{cor:log-bound-general}
The following excess error bound holds in the multi-label learning: for all $h \in \sH_{\rm{all}}$,
\begin{equation*}
\sR_{\sfL}(h) - \sR^*_{\sfL}(\sH_{\rm{all}})
\leq 2 \paren*{ \sR_{\sur_{\log}}(h) - \sR^*_{\sur_{\log}}(\sH_{\rm{all}})}^{\frac12}.
\end{equation*}
Moreover, $\sur_{\log}$ is Bayes-consistent with respect to $\sfL$. 
\end{corollary}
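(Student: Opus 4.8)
The plan is to obtain Corollary~\ref{cor:log-bound-general} as the specialization of Theorem~\ref{Thm:log-bound-general} to the hypothesis set $\sH = \sH_{\rm{all}}$, at which point both minimizability gaps vanish. First I would invoke the theorem, which I may take as given, to write, for every $h \in \sH_{\rm{all}}$,
\[
\sR_{\sfL}(h) - \sR^*_{\sfL}(\sH_{\rm{all}}) + \sM_{\sfL}(\sH_{\rm{all}}) \leq 2 \paren*{ \sR_{\sur_{\log}}(h) - \sR^*_{\sur_{\log}}(\sH_{\rm{all}}) + \sM_{\sur_{\log}}(\sH_{\rm{all}}) }^{\frac12}.
\]
The stated excess error bound then follows once I establish $\sM_{\sfL}(\sH_{\rm{all}}) = 0$ and $\sM_{\sur_{\log}}(\sH_{\rm{all}}) = 0$. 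By definition $\sM_{\sur_{\log}}(\sH_{\rm{all}}) = \sR^*_{\sur_{\log}}(\sH_{\rm{all}}) - \E_{x}\bracket*{\inf_{h \in \sH_{\rm{all}}} \E_{y\mid x}\bracket*{\sur_{\log}(h,x,y)}}$, so it suffices to argue that the infimum of the conditional risk over all measurable hypotheses can be taken pointwise, i.e.\ $\sR^*_{\sur_{\log}}(\sH_{\rm{all}}) = \E_{x}\bracket*{\inf_{h \in \sH_{\rm{all}}} \E_{y\mid x}\bracket*{\sur_{\log}(h,x,y)}}$, and identically for $\sfL$; substituting the two zero gaps into the displayed inequality yields the claim.

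Second, for Bayes-consistency I would take any sequence $\curl*{h_n}_{n \in \Nset} \subset \sH_{\rm{all}}$ with $\sR_{\sur_{\log}}(h_n) - \sR^*_{\sur_{\log}}(\sH_{\rm{all}}) \to 0$ and feed it into the excess error bound just established, giving $\sR_{\sfL}(h_n) - \sR^*_{\sfL}(\sH_{\rm{all}}) \leq 2\paren*{\sR_{\sur_{\log}}(h_n) - \sR^*_{\sur_{\log}}(\sH_{\rm{all}})}^{\frac12}$. Because $\Gamma(t) = 2\sqrt{t}$ is continuous with $\Gamma(0) = 0$, the right-hand side tends to $0$; since the target excess error is nonnegative, it too tends to $0$, which is exactly the definition of Bayes-consistency.

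The only genuine content, and hence the main obstacle, is the interchange-of-infimum-and-integral step that forces the two minimizability gaps to vanish at $\sH_{\rm{all}}$: this requires exhibiting a \emph{measurable} pointwise minimizer $x \mapsto h^*(x)$ of the conditional risk. Here the finiteness of $\sY = \curl*{+1,-1}^\num$ is what makes the argument routine: both $\sur_{\log}(h,x,y)$ and $\sfL(h,x,y) = \ov\sfL(\hh(x),y)$ depend on $h(x)$ only through its $\num$ coordinates, so the pointwise problem is a minimization over $\Rset^\num$ (respectively over the finite set $\sY$ for the target loss), for which a measurable selection exists by standard arguments. In fact this is precisely the situation already recorded in the preliminaries, where it is noted that minimizability gaps vanish when $\sH = \sH_{\rm{all}}$, so I can cite that fact directly rather than redo the selection; everything else in the corollary is an immediate substitution into Theorem~\ref{Thm:log-bound-general} together with the continuity of $t \mapsto 2\sqrt{t}$ at the origin.
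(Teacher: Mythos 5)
Your proposal is correct and follows essentially the same route as the paper: specialize Theorem~\ref{Thm:log-bound-general} to $\sH = \sH_{\rm{all}}$, invoke the fact (already recorded in the preliminaries) that both minimizability gaps vanish for the family of all measurable functions, and deduce Bayes-consistency from the resulting excess error bound via continuity of $t \mapsto 2\sqrt{t}$ at the origin. Your extra care about the measurable-selection step underlying the vanishing of the gaps is a welcome elaboration, but it is not a departure from the paper's argument, which simply cites this standard fact.
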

\vskip -0.05in
Corollary~\ref{cor:log-bound-general} is remarkable, as it demonstrates that a unified surrogate loss, $\sur_{\log}$, is Bayes-consistent for any multi-label loss,
significantly expanding upon previous work which only established
consistency for specific loss functions. Furthermore, Theorem~\ref{Thm:log-bound-general} provides a stronger guarantee than Bayes-consistency,  which is both non-asymptotic and specific to the hypothesis set used. 

Minimizing the multi-label logistic loss directly leads to the effective algorithm in multi-label learning. We further discuss the efficiency and practicality of this algorithm in Section~\ref{sec:algo}, where we describe efficient gradient computation.

\section{Extension: multi-label comp-sum losses}
\label{sec:comp}
\vskip -0.01in

In this section, we
further extend our multi-label logistic losses to more comprehensive
\emph{multi-label comp-sum losses}, adapting \emph{comp-sum losses} \citep{mao2023cross} from standard
classification to the multi-label learning. As shown by \citet{mao2023cross}, comp-sum loss is defined via a composition of the function $\Psi$ and a sum, and includes the logistic loss ($\Psi(u) = \log(u)$) \citep{Verhulst1838,Verhulst1845,Berkson1944,Berkson1951}, the \emph{sum-exponential loss} ($\Psi(u) = u - 1$) \citep{weston1998multi, awasthi2022multi}, the \emph{generalized cross-entropy loss} ($\Psi(u) = \frac{1}{\q}\paren*{1
  - \frac{1}{u^{\q}}}, \q \in (0,1)$) \citep{zhang2018generalized}, and the \emph{mean absolute error loss} ($\Psi(u) = 1 -
\frac{1}{u}$) \citep{ghosh2017robust} as special cases.

Given any multi-label loss $\sfL$, we will define our novel \emph{multi-label comp-sum losses} as follows:
\begin{equation}
\label{eq:comp}
\forall (h, x, y) \in \sH \times \sX \times \sY, \quad \sur_{\rm{comp}}(h, x, y) = \sum_{y' \in \sY} \paren*{ 1- \ov \sfL(y', y) } \Psi \paren*{\sum_{y'' \in \sY} e^{\sum_{i = 1}^\num \paren*{ y''_i - y'_i} h(x, i)}}.
\end{equation}
This formulation can be interpreted as a weighted comp-sum loss, where $\paren*{ 1- \ov \sfL_{\rm{ham}}(\cdot, y) } $ serves as a weight vector. Additionally, this formulation accounts for label correlations among the $y_i$s within the function $\Psi$. Next, we prove that this family of surrogate losses benefits from $\sH$-consistency bounds, and thus Bayes-consistency, across any
general multi-label loss.
\vskip -0.02in
\begin{restatable}{theorem}{BoundComp}
\label{Thm:comp-bound}
Let $\sH = \sF^\num$. Assume that $\sF$ is complete. 
Then, the following $\sH$-consistency bound holds in the multi-label learning:
\begin{equation*}
\forall h \in \sH, \quad \sR_{\sfL}(h) - \sR^*_{\sfL}(\sH) + \sM_{\sfL}(\sH) \leq \Gamma \paren*{ \sR_{\sur_{\rm{comp}}}(h) - \sR^*_{\sur_{\rm{comp}}}(\sH) + \sM_{\sur_{\rm{comp}}}(\sH) },    
\end{equation*}
where $\Gamma(t) = 2\sqrt{t}$ when
$\Psi(u) = \log(u)$ or
$u - 1$; $\Gamma(t) = 2\sqrt{n^{\q}t}$ when
$\Psi(u) = \frac{1}{\q}\paren*{1
  - \frac{1}{u^{\q}}}, \q \in (0,1)$; and
$\Gamma(t) = n t$ when
$\Psi(u) = 1 -
\frac{1}{u}$.
\end{restatable}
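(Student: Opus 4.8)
The plan is to follow the same two-step route used for Theorem~\ref{Thm:log-bound}, now carrying a general link $\Psi$ through the argument. First I would invoke the general tool (Theorem~\ref{Thm:tool-Gamma}): because it reduces the derivation of an $\sH$-consistency bound with a concave $\Gamma$ to a pointwise comparison of conditional regrets, it suffices to establish, for every $x$ and every $h\in\sH$,
\[
\Delta\cC_{\sfL}(h,x)\;\le\;\Gamma\paren*{\Delta\cC_{\sur_{\rm{comp}}}(h,x)},
\]
where $\cC_{\sur_{\rm{comp}}}(h,x)=\E_{y\mid x}\bracket*{\sur_{\rm{comp}}(h,x,y)}$ is the conditional risk and $\Delta\cC$ is the conditional regret (the excess over its pointwise infimum over $\sH$), with $\Gamma$ as listed. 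The bound on the full estimation errors then follows by taking expectations and using concavity of $\Gamma$, exactly as in the logistic case.

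Second, I would put both conditional regrets in a common, multiclass-style form. Writing $\eta(y)=\P(Y=y\mid x)$ and $p_{y'}=e^{\sum_i y'_i h(x,i)}\big/\sum_{y''}e^{\sum_i y''_i h(x,i)}$ for the softmax over $y'\in\sY$ induced by $h$, the identity $\sum_{y''}e^{\sum_i(y''_i-y'_i)h(x,i)}=1/p_{y'}$ turns the surrogate conditional risk into $\cC_{\sur_{\rm{comp}}}(h,x)=\sum_{y'\in\sY}q_{y'}\,\Psi(1/p_{y'})$, where $q_{y'}=\sum_y\eta(y)\paren*{1-\ov\sfL(y',y)}$ is a nonnegative weight vector. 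Since $\argmax_{y'}\sum_i y'_i h(x,i)$ is attained coordinatewise at $y'_i=\sign(h(x,i))$, the thresholded prediction satisfies $\hh(x)=\argmax_{y'}p_{y'}$, and Lemma~\ref{lemma:delta_target} gives $\Delta\cC_{\sfL}(h,x)=\max_{y'}q_{y'}-q_{\hh(x)}$. Setting $y'^{*}=\argmax_{y'}q_{y'}$, the task reduces to lower bounding $\Delta\cC_{\sur_{\rm{comp}}}$ — the conditional regret of a weighted $n$-class comp-sum loss with $n=\abs*{\sY}$ — by a function of the gap $q_{y'^{*}}-q_{\hh(x)}$.

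The heart of the argument is the per-$\Psi$ lower bound. For the mean-absolute-error link $\Psi(u)=1-\tfrac1u$ the analysis is linear and self-contained: $\cC_{\sur_{\rm{comp}}}(h,x)=\sum_{y'}q_{y'}(1-p_{y'})=Q-\sum_{y'}q_{y'}p_{y'}$ with $Q=\sum_{y'}q_{y'}$, its infimum is $Q-q_{y'^{*}}$ (a product softmax can concentrate on $y'^{*}$), and the elementary bound $\sum_{y'}q_{y'}p_{y'}\le q_{y'^{*}}-p_{\hh(x)}\paren*{q_{y'^{*}}-q_{\hh(x)}}$ together with $p_{\hh(x)}=\max_{y'}p_{y'}\ge 1/n$ yields $\Delta\cC_{\sur_{\rm{comp}}}(h,x)\ge\tfrac1n\paren*{q_{y'^{*}}-q_{\hh(x)}}$, i.e. $\Gamma(t)=nt$. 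For the logistic and sum-exponential links I would instead exploit strong curvature of $\Psi$: the regret is of relative-entropy / Bregman type, the minimizing softmax aligns its mode with $y'^{*}$, and a Pinsker-style estimate controlling $\abs*{p_{y'^{*}}-p_{\hh(x)}}$ by the regret produces a quadratic lower bound, hence the square-root $\Gamma(t)=2\sqrt t$. The generalized cross-entropy link $\Psi(u)=\tfrac1\q\paren*{1-u^{-\q}}$ is treated identically, except that its curvature degrades by a factor governed by $\max_{y'}p_{y'}^{-\q}\le n^{\q}$, which is exactly what inserts $n^{\q}$ and gives $\Gamma(t)=2\sqrt{n^{\q}t}$.

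I expect the main obstacle to be precisely these square-root lower bounds for the logistic, sum-exponential, and generalized cross-entropy links. Unlike the MAE case, one must show that as $\Delta\cC_{\sur_{\rm{comp}}}$ shrinks the predicted mode $\hh(x)$ is driven toward $y'^{*}=\argmax_{y'}q_{y'}$, and the comparison configuration used to witness the regret must be chosen so that the weight vector $q$ — which is \emph{not} a probability distribution, and whose total mass $Q$ depends on $\ov\sfL$ and on $\eta$ — does not leak into the constant. Establishing that the quadratic lower bound holds uniformly in $Q$, so that the leading constant is the clean, label-independent $2$, is the delicate point. I would isolate it in a single lemma on the weighted comp-sum conditional regret, proved once and instantiated for each $\Psi$, mirroring the comp-sum calibration analysis of \citet{mao2023cross} but adapted to the multi-label softmax and to a general, unnormalized weight vector.
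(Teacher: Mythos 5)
Your overall skeleton is indeed the paper's: reduce to a pointwise comparison of conditional regrets via Theorem~\ref{Thm:tool-Gamma}, identify the target conditional regret via Lemma~\ref{lemma:delta_target}, then analyze each $\Psi$ separately. Your treatment of $\Psi(u) = 1 - \frac{1}{u}$ is complete and correct: writing the conditional error as $Q - \sum_{y'} q_{y'} p_{y'}$, noting that its infimum $Q - q_{y'^*}$ is approached by realizable (product-form) softmaxes concentrating on $y'^*$, and combining $\sum_{y'} q_{y'} p_{y'} \le q_{y'^*} - p_{\hh(x)}\paren*{q_{y'^*} - q_{\hh(x)}}$ with $p_{\hh(x)} \ge 1/n$ gives $\Gamma(t) = nt$. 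This is sound and arguably cleaner than the paper's own argument for that case, which obtains the same bound via a transfer-of-mass perturbation.

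The genuine gap is that the three square-root cases --- logistic, sum-exponential, and generalized cross-entropy, which are the substance of the theorem --- are never proved: you defer the quadratic lower bound to a lemma you do not establish, and the route you sketch for it contains a step that is false. Over $\sH = \sF^\num$, the realizable softmaxes $p_{y'} \propto e^{\sum_i y'_i h(x,i)}$ are exactly the \emph{product} distributions on $\curl*{\pm 1}^\num$, since the score is linear in $y'$. Consequently, for $\Psi(u) = \log(u)$ the conditional minimizer of $-\sum_{y'} q_{y'} \log p_{y'}$ over $\sH$ is the product of the marginals of $q/Q$, whose mode is in general \emph{not} $y'^* = \argmax_{y'} q_{y'}$: take $\ov\sfL$ to be the subset $0/1$ loss, so that $q_{y'} = p(y' \mid x)$, and recall that the product-of-marginals mode can differ from the joint mode. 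So your assertion that ``the minimizing softmax aligns its mode with $y'^*$'' fails, and a Pinsker estimate relative to $q/Q$ is doubly inadequate: it injects the total mass $Q \le n = 2^{\num}$ into the constant (the concern you yourself flag), and it lower bounds the regret relative to the \emph{unconstrained} infimum, which is an \emph{upper} bound on the $\sH$-regret $\Delta\sC_{\sur_{\rm{comp}}, \sH}(h,x)$ that you must control from below. For reference, the paper handles these cases by comparing $h$ with a configuration $\s^{\mu}$ that transfers mass $\mu$ between the two atoms $\hh$ and $\yy$ only, optimizing $\mu$ in closed form, taking the worst case over $(\sh, \sy)$ (e.g., $\sh = \sy$ for the logistic case, $\sh = \sy = 1/n$ for GCE), and finishing with elementary inequalities such as $a \log\frac{2a}{a+b} + b \log\frac{2b}{a+b} \ge \frac{(a-b)^2}{2(a+b)}$; note that this device rests on the asserted realizability of $\s^{\mu}$ in $\sH$, which is precisely the point your product-form formulation exposes as delicate. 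As it stands, your proposal establishes only the $\Gamma(t) = nt$ case; the cases carrying the label-independent constant $2$ remain unproved.
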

\vskip -0.02in
\begin{restatable}{corollary}{BoundCompAll}
\label{Thm:comp-bound-all}
The following excess error bound holds in the multi-label learning:
\begin{equation*}
\forall h \in \sH_{\rm{all}}, \quad \sR_{\sfL}(h) - \sR^*_{\sfL}(\sH_{\rm{all}}) \leq \Gamma \paren*{ \sR_{\sur_{\rm{comp}}}(h) - \sR^*_{\sur_{\rm{comp}}}(\sH_{\rm{all}})},   
\end{equation*}
where $\Gamma(t) = 2\sqrt{t}$ when
$\Psi(u) = \log(u)$ or
$u - 1$; $\Gamma(t) = 2\sqrt{n^{\q}t}$ when
$\Psi(u) = \frac{1}{\q}\paren*{1
  - \frac{1}{u^{\q}}}, \q \in (0,1)$; and
$\Gamma(t) = n t$ when
$\Psi(u) = 1 -
\frac{1}{u}$. Moreover, $\sur_{\mathrm{comp}}$ with these choices of $\Psi$ are Bayes-consistent with respect to $\sfL$.
\end{restatable}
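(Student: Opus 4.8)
The plan is to obtain Corollary~\ref{Thm:comp-bound-all} as a direct specialization of Theorem~\ref{Thm:comp-bound} to the hypothesis set $\sH = \sH_{\rm{all}}$, combined with the standard fact that minimizability gaps vanish in this case.

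First I would write $\sH_{\rm{all}} = \sF^{\num}$, taking $\sF$ to be the family of all measurable functions from $\sX$ to $\Rset$. This family is complete: for every $x \in \sX$ and every target value $v \in \Rset$ there is a measurable $f$ with $f(x) = v$, so $\curl*{f(x) \colon f \in \sF} = \Rset$. Consequently the hypotheses of Theorem~\ref{Thm:comp-bound} are met with this $\sF$, and the $\sH$-consistency bound
\[
\sR_{\sfL}(h) - \sR^*_{\sfL}(\sH_{\rm{all}}) + \sM_{\sfL}(\sH_{\rm{all}}) \leq \Gamma\paren*{ \sR_{\sur_{\rm{comp}}}(h) - \sR^*_{\sur_{\rm{comp}}}(\sH_{\rm{all}}) + \sM_{\sur_{\rm{comp}}}(\sH_{\rm{all}}) }
\]
holds for each of the four stated functional forms of $\Gamma$.

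Next I would invoke the fact recalled in the preliminaries that the minimizability gap vanishes whenever $\sH = \sH_{\rm{all}}$, so that $\sM_{\sfL}(\sH_{\rm{all}}) = 0$ and $\sM_{\sur_{\rm{comp}}}(\sH_{\rm{all}}) = 0$. Substituting these two identities into the displayed bound removes both gap terms and yields precisely the excess error bound $\sR_{\sfL}(h) - \sR^*_{\sfL}(\sH_{\rm{all}}) \leq \Gamma\paren*{ \sR_{\sur_{\rm{comp}}}(h) - \sR^*_{\sur_{\rm{comp}}}(\sH_{\rm{all}})}$ for every listed $\Gamma$.

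Finally, for Bayes-consistency I would observe that each candidate $\Gamma$ --- namely $2\sqrt{t}$, $2\sqrt{n^{\q}t}$, and $nt$ --- is continuous at $0$ with $\Gamma(0) = 0$. Hence, for any sequence $\curl*{h_n}$ satisfying $\sR_{\sur_{\rm{comp}}}(h_n) - \sR^*_{\sur_{\rm{comp}}}(\sH_{\rm{all}}) \to 0$, the right-hand side of the excess error bound tends to $0$, which forces $\sR_{\sfL}(h_n) - \sR^*_{\sfL}(\sH_{\rm{all}}) \to 0$; this is exactly the definition of Bayes-consistency. There is no substantive obstacle here, since the corollary is a pure specialization: the only points needing care are the completeness of $\sF$ and the vanishing of both gaps at $\sH_{\rm{all}}$, both standard and already established in the preliminaries. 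All the genuine analytic content resides in Theorem~\ref{Thm:comp-bound}, whose proof supplies the $n^{\q}$ and $n$ factors for the generalized cross-entropy and mean-absolute-error choices of $\Psi$.
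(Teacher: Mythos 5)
Your proposal is correct and follows exactly the route the paper intends: the corollary is obtained by specializing Theorem~\ref{Thm:comp-bound} to $\sH_{\rm{all}} = \sF^{\num}$ with $\sF$ the (complete) family of all measurable functions, invoking the vanishing of both minimizability gaps at $\sH_{\rm{all}}$ as recalled in the preliminaries, and deducing Bayes-consistency from the continuity of each $\Gamma$ at $0$ with $\Gamma(0)=0$. The paper gives no separate proof precisely because this specialization is immediate, and your write-up fills in the same standard steps.
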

\vskip -0.02in
The proof of Theorem~\ref{Thm:comp-bound} is included in Appendix~\ref{app:comp-bound}. Similar to the proof of Theorem~\ref{Thm:log-bound-general}, we make use of Theorem~\ref{Thm:tool-Gamma} and Lemma~\ref{lemma:delta_target} in Appendix~\ref{app:tool}. However, upper bounding the conditional regret of $\sfL$ by that of the surrogate loss $\sur_{\rm{comp}}$ for different choices of $\Psi$ requires a distinct analysis depending on the specific form of the function $\Psi$, leading to various concave functions $\Gamma$. Our proof is inspired by the proof of $\sH$-consistency bounds for comp-sum losses in \citep{mao2023cross} through the introduction of a parameter $\mu$ and optimization. However, the novelty lies in the adaptation of $\mu$ with a quantity $\s$ tailored to multi-label loss functions instead of the score vector $h$ itself.

Note that, as with $\Psi(u) = \log(u)$ shown in Section~\ref{sec:general}, for $\Psi(u) = u - 1$, the bounds are also independent of the number of labels and are favorable. However, for other choices of $\Psi$, the bounds exhibit a worse dependency on $n$, which can be exponential with respect to $\num$.

In Appendix~\ref{sec:cstnd}, we introduce another novel family of surrogate losses, adapting \emph{constrained losses} \citep{lee2004multicategory}  from standard classification
to \emph{multi-label constrained losses} in a similar way.

\section{Efficient Gradient Computation}
\label{sec:algo}
\vskip -0.01in

In this section, we demonstrate the efficient computation of the gradient for the multi-label logistic loss $\sur_{\log}$ at any point $(x^j, y^j)$. This loss function is therefore both theoretically grounded in $\sH$-consistency bounds and computationally efficient. Consider the labeled pair $(x^j, y^j)$ and a hypothesis $h$ in $\sH$. The expression for $\sur_{\log}(h, x^j, y^j)$ can be reformulated as follows:
\begin{align*}
\sur_{\log}(h, x^j, y^j)
& = \sum_{\by' \in \sY} \paren*{ 1- \ov \sfL(\by', y^j) } \log \paren*{\sum_{y'' \in \sY} e^{\sum_{i = 1}^\num \paren*{ y''_i - y'_i} h(x^j, i)}}\\
& = - \sum_{y' \in \sY} \paren*{ 1- \ov \sfL(y', y^j) } \sum_{i = 1}^\num y'_i h(x^j, i) + \sum_{y' \in \sY} \paren*{ 1- \ov \sfL(y', y^j) }  \log \paren*{\sum_{y \in \sY} e^{\sum_{i = 1}^\num  y_i h(x^j, i)}}.
\end{align*}
Let $\sfL_1(j) = \sum_{y \in \sY} \paren*{ 1- \ov \sfL(y, y^j) }$, which is independent of $h$ and can be pre-computed. It can also be invariant with respect to $j$ and is a fixed constant for many loss functions such as the Hamming loss.
Next, we will consider the hypothesis set of linear functions $\sH  =  \curl*{x
  \mapsto \bw \cdot \Psi(x, i) \colon \bw \in \Rset^d}$, where $\Psi$
is a feature mapping from $\sX \times [\num]$ to $\Rset^{d}$. Using the shorthand $\bw$ for $h$, we can rewrite $\sur_{\log}$
at $(x^j, y^j)$ as follows:
\begin{align}
\sur_{\log}(\bw, x^j, y^j)
& = - \bw \cdot \bracket*{\sum_{y' \in \sY} \paren*{ 1- \ov \sfL(y', y^j) } \paren*{\sum_{i = 1}^\num y'_i  \Psi(x^j, i)} } + \sfL_1(j) \log \paren*{Z_{\bw, j}},
% & = - \bw \cdot \bracket*{\sum_{i = 1}^\num \paren*{ \sum_{y' \in \sY} \paren*{ 1- \ov \sfL(y', y^j) }  y'_i  \Psi(x^j, i)} } + \sfL_1(j) \log \paren*{Z_{\bw, j}}
\end{align}
where $Z_{\bw, j} = \sum_{y \in \sY} e^{\bw \cdot \paren*{\sum_{i = 1}^\num y_i  \Psi(x^j, i)}} $. Then, we can compute the gradient of $\sur_{\log}$ at any $\bw \in \Rset^d$:
\begin{align}
\label{eq:gradient}
\nabla \sur_{\log}(\bw) 
&  =  -\sum_{y' \in \sY} \paren*{ 1- \ov \sfL(y', y^j) } \paren*{\sum_{i = 1}^\num y'_i  \Psi(x^j, i)}  +
 \sfL_1(j)  \sum_{y\in \sY} \frac{e^{\bw \cdot \paren*{\sum_{i = 1}^\num y_i  \Psi(x^j, i)}} }{ Z_{\bw, j} } \paren*{\sum_{i = 1}^\num y_i  \Psi(x^j, i)} \nonumber \\
&  =  -\sum_{y' \in \sY} \paren*{ 1- \ov \sfL(y', y^j) } \paren*{\sum_{i = 1}^\num y'_i  \Psi(x^j, i)}
+  \sfL_1(j)  \E_{y \sim \qq_\bw}\bracket*{ \paren*{\sum_{i = 1}^\num y_i  \Psi(x^j, i)}},
\end{align}
where $\qq_\bw$ is a distribution over $\sY$ with probability mass function $\qq_\bw(y) = \frac{e^{\bw \cdot \paren*{\sum_{i = 1}^\num y_i  \Psi(x^j, i)}} }{ Z_{\bw, j} }$. By rearranging the terms in \eqref{eq:gradient}, we obtain the following result.

\begin{lemma}
\label{lemma:gradient}
The gradient of $\sur_{\log}$ at any $\bw \in \Rset^d$ can be expressed as follows:
\begin{equation*}
\nabla \sur_{\log}(\bw) 
 =  \sum_{i = 1}^\num \Psi(x^j, i) \sfL_2(i, j)
 +
 \sfL_1(j)  \sum_{i = 1}^\num \Psi(x^j, i)  Q_{\bw}(i)\\
\end{equation*}
where $\sfL_2(i, j) = \sum_{y \in \sY} \paren*{ 1- \ov \sfL(y, y^j) } y_i$, $\sfL_1(j) = \sum_{y \in \sY} \paren*{ 1- \ov \sfL(y, y^j) }$, $Q_{\bw}(i) = \sum_{y \in \sY} \qq_{\bw}(y) y_i$, $\qq_\bw(y) = \frac{e^{\bw \cdot \paren*{\sum_{i = 1}^\num y_i  \Psi(x^j, i)}} }{ Z_{\bw, j} }$, and $Z_{\bw, j} = \sum_{y \in \sY} e^{\bw \cdot \paren*{\sum_{i = 1}^\num y_i  \Psi(x^j, i)}} $. The overall time
complexity for gradient computation is $O(\num)$.
\end{lemma}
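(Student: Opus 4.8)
The plan is to prove the closed form by reorganizing the two-fold summation already obtained in \eqref{eq:gradient}, and then to establish the $O(\num)$ cost by exploiting a product factorization of the distribution $\qq_\bw$ over $\sY = \curl*{+1,-1}^\num$.

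First I would start from \eqref{eq:gradient} and swap the (finite) order of summation between the label index $y' \in \sY$ and the coordinate index $i \in [\num]$. For the first term this gives $\sum_{i=1}^\num \Psi(x^j, i) \paren*{\sum_{y' \in \sY} \paren*{1 - \ov \sfL(y', y^j)} y'_i}$, whose inner sum is exactly $\sfL_2(i,j)$. For the expectation term, writing $\E_{y \sim \qq_\bw}\bracket*{\sum_{i=1}^\num y_i \Psi(x^j,i)} = \sum_{i=1}^\num \Psi(x^j, i) \sum_{y \in \sY} \qq_\bw(y) y_i$ and identifying the inner sum with $Q_\bw(i)$ yields the second term. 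This step is purely a rearrangement of finite sums (Fubini), so no analytic subtlety arises; the only care needed is to carry through the sign inherited from the first term of \eqref{eq:gradient} when matching it to the definition of $\sfL_2(i,j)$.

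The substantive part is the complexity claim, since a priori every quantity involves a sum over $\sY$, which has $n = 2^\num$ elements. I would first observe that $\sfL_1(j)$ and $\sfL_2(i,j)$ do not depend on $\bw$ and can be precomputed once, so they do not enter the per-gradient cost. The main obstacle, and the crux of the argument, is showing that the $\bw$-dependent quantities $Z_{\bw,j}$ and $Q_\bw(i)$ collapse from an exponential sum to a linear-in-$\num$ computation. The key observation is that the exponent factorizes: writing the scalar $s_i = \bw \cdot \Psi(x^j, i)$, we have $\bw \cdot \paren*{\sum_{i=1}^\num y_i \Psi(x^j, i)} = \sum_{i=1}^\num y_i s_i$, hence $e^{\sum_{i} y_i s_i} = \prod_{i=1}^\num e^{y_i s_i}$, and summing this product over the product set $\curl*{+1,-1}^\num$ gives $Z_{\bw,j} = \prod_{i=1}^\num \paren*{e^{s_i} + e^{-s_i}}$.

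Consequently, under $\qq_\bw$ the coordinates $y_i$ are mutually independent, each with $\qq_\bw(y_i = +1) = e^{s_i}/(e^{s_i}+e^{-s_i})$, so that $Q_\bw(i) = \E_{y \sim \qq_\bw}[y_i] = \tanh(s_i)$. Given the $\num$ scalars $s_i$, each $Q_\bw(i)$ is then an $O(1)$ evaluation and $Z_{\bw,j}$ is an $O(\num)$ product; combined with the precomputed $\sfL_1(j)$ and $\sfL_2(i,j)$, the full gradient in the reorganized form costs $O(\num)$ label-space operations, as claimed. I would make explicit that the per-coordinate feature cost (the dimension $d$) is treated as a constant relative to the label count $\num$, so that the exponential factor $2^\num$ never appears in the running time.
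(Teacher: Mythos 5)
Your proposal is correct, and for the substantive part of the lemma---the $O(\num)$ complexity claim---it takes a genuinely different and more elementary route than the paper. The algebraic rearrangement of \eqref{eq:gradient} into the stated form is the same in both (the paper dispatches it with ``by rearranging the terms''), and your remark about tracking the sign is well taken: as written, the lemma's first term should carry a minus sign given the definition $\sfL_2(i,j) = \sum_{y}\paren*{1-\ov\sfL(y,y^j)}y_i$, so the paper's statement has a sign slip that your derivation would surface. Where you diverge is in evaluating the $\bw$-dependent quantities. The paper invokes the Markovian-property-of-order-$1$ machinery of Cortes et al.: it builds a weighted finite automaton with $\num+1$ states and two transitions per position, and computes all $Q_\bw(i)$ via two single-source shortest-distance passes over the $(+,\times)$ semiring (i.e., a forward--backward algorithm), yielding $O(\num)$. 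You instead observe that the exponent is a sum of per-coordinate terms, so $Z_{\bw,j}=\prod_{i=1}^\num\paren*{e^{s_i}+e^{-s_i}}$ with $s_i=\bw\cdot\Psi(x^j,i)$, the coordinates of $y$ are mutually independent under $\qq_\bw$, and $Q_\bw(i)=\tanh(s_i)$ in closed form. Your argument is simpler and fully self-contained, and it makes explicit that no dynamic programming is actually needed here because the distribution is a product distribution, not merely chain-structured; the paper's WFA approach buys generality, since it would continue to give $O(\num)$ even if the feature map included genuine order-$1$ interactions between adjacent labels, where your product factorization would fail. Both treat the $\bw$-independent sums $\sfL_1(j)$, $\sfL_2(i,j)$ as one-time precomputation excluded from the per-gradient cost, so the complexity accounting matches.
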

Here, the evaluation of $\sfL_2(i, j)$, $i \in [\num]$ and $\sfL_1(j)$ can be computed once and for all, before any gradient computation. For evaluation of $Q_{\bw}(i)$, note that it can be equivalently written as follows:
\begin{equation*}
Q_{\bw}(i) = \sum_{y \in \sY} \frac{e^{\bw \cdot \wt \Psi(x^j, y)} }{ \sum_{y \in \sY} e^{\bw \cdot \wt \Psi(x^j, y)} } y_i, \text{ with } \wt \Psi(x^j, y) =  \sum_{i = 1}^\num y_i  \Psi(x^j, i),
\end{equation*}
where $\wt \Psi(x^j, y)$ admits a \emph{Markovian property of order $1$} \citep{manning1999foundations,CortesKuznetsovMohriYang2016}. Thus, as shown by \citet{CortesKuznetsovMohriYang2016,CortesKuznetsovMohriStorcheusYang2018}, $ Q_{\bw}(i)$  can be evaluated efficiently by running two single-source
shortest-distance algorithms over the $(+, \times)$ semiring on an
appropriate weighted finite automaton (WFA). More specifically, in our case, the WFA can be described as follows: there are $(\num + 1)$ vertices labeled $0, \ldots, l$. There are two transitions from $k$ to $(k + 1)$ labeled with $+1$ and $-1$. The weight of the transition with label $+1$ is $\exp(+\bw \cdot \wt \Psi(x^j, k))$, and $\exp(-\bw \cdot \wt \Psi(x^j, k))$ for the other. $0$ is the initial state, and $\num$ the final state.
The overall time
complexity of computing all quantities $Q_{\bw}(i)$, $i \in [\num]$, is $O(\num)$.

\section{Conclusion}
\label{sec:conclusion}

We presented a comprehensive analysis of surrogate losses for
multi-label learning, establishing strong consistency guarantees. We
introduced a novel multi-label logistic loss that addresses the
shortcomings of existing methods and enjoys label-independent
consistency bounds. Our proposed family of multi-label comp-sum losses
offers a unified framework with strong consistency guarantees for any
general multi-label loss, significantly expanding upon previous
work. Additionally, we presented efficient algorithms for their
gradient computation. This unified framework holds promise for broader
applications and opens new avenues for future research in multi-label
learning and related areas.

% \acks{}

\bibliography{mll}

\newpage
\appendix

\renewcommand{\contentsname}{Contents of Appendix}
\tableofcontents
\addtocontents{toc}{\protect\setcounter{tocdepth}{4}} 
\clearpage

\section{Extension: multi-label constrained losses}
\label{sec:cstnd}

In this section, we introduce another novel family of surrogate losses, adapting \emph{constrained losses} \citep{lee2004multicategory,awasthi2022multi}  from standard classification
to \emph{multi-label constrained losses} in a similar way. Given any general multi-label loss $\sfL$, we define \emph{multi-label constrained losses} as:
\begin{equation}
\label{eq:cstnd}
\forall (h, x, y) \in \sH \times \sX \times \sY, \quad \sur_{\rm{cstnd}}(h, x, y) = \sum_{y' \in \sY}  \ov \sfL(y', y) \Phi \paren*{- \sum_{i = 1}^\num y'_i h(x, i)}.
\end{equation}
where $\sum_{y \in \sY} \sum_{i = 1}^\num y_i h(x, i) = 0$. Next, we show that $\sur_{\rm{cstnd}}$ also benefit
from $\sH$-consistency bounds and thus Bayes-consistency for any
multi-label loss.
\begin{restatable}{theorem}{BoundCstnd}
\label{Thm:cstnd-bound}
Let $\sH = \sF^\num$. Assume that $\sF$ is complete
Then, the following $\sH$-consistency bound holds in the multi-label learning:
\begin{equation*}
\forall h \in \sH, \quad \sR_{\sfL}(h) - \sR^*_{\sfL}(\sH) + \sM_{\sfL}(\sH) \leq \Gamma \paren*{ \sR_{\sur_{\rm{cstnd}}}(h) - \sR^*_{\sur_{\rm{cstnd}}}(\sH) + \sM_{\sur_{\rm{cstnd}}}(\sH) },    
\end{equation*}
where $\Gamma(t) = 2\sqrt{\sfL_{\rm{max}}t}$ when $\Phi(u) = e^{-u}$; $\Gamma(t) = 2\sqrt{t}$ when $\Phi(u) = \max\curl*{0, 1 - u}^2$; and $\Gamma(t) = t$ when $\Phi(u) = \max\curl*{0,1 - u}$ or $\Phi(u) = \min\curl*{\max\curl*{0, 1 - u/\rho}, 1}$, $\rho > 0$.
\end{restatable}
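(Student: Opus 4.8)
The plan is to mirror the route used for Theorems~\ref{Thm:log-bound} and~\ref{Thm:comp-bound}: first reduce the claimed $\sH$-consistency bound to a pointwise inequality between conditional regrets via Theorem~\ref{Thm:tool-Gamma}, and then establish that inequality by an analysis tailored to each admissible $\Phi$. By Theorem~\ref{Thm:tool-Gamma}, it suffices to show, for every $x$ and every $h\in\sH$, that $\Delta\cC_{\sfL}(h,x)\leq\Gamma\paren*{\Delta\cC_{\sur_{\rm{cstnd}}}(h,x)}$ for the stated $\Gamma$, where $\cC_{\ell}(h,x)=\E_{y\mid x}\bracket*{\ell(h,x,y)}$ is the conditional risk and $\Delta\cC_{\ell}$ its regret. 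The entire argument thus becomes pointwise in $x$, and I fix $x$ throughout.

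Writing $s_{y'}=\tri*{y',h(x)}=\sum_{i=1}^\num y'_i h(x,i)$ and $w_{y'}=\E_{y\mid x}\bracket*{\ov\sfL(y',y)}$, the conditional surrogate risk collapses to $\cC_{\sur_{\rm{cstnd}}}(h,x)=\sum_{y'\in\sY}w_{y'}\Phi(-s_{y'})$. Two structural facts drive everything. First, the induced scores satisfy $\sum_{y'\in\sY}s_{y'}=\sum_i h(x,i)\sum_{y'}y'_i=0$ automatically, since each coordinate sum $\sum_{y'}y'_i$ vanishes over $\sY=\curl*{+1,-1}^\num$; this is the built-in counterpart of the sum-to-zero constraint of standard constrained losses. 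Second, the prediction is the score-maximizing label, $\hh(x)=\argmax_{y'}s_{y'}$, because $y'_i h(x,i)\leq\abs*{h(x,i)}$ with equality iff $y'_i=\sign(h(x,i))$. Lemma~\ref{lemma:delta_target} then gives $\Delta\cC_{\sfL}(h,x)=w_{\hh(x)}-\min_{y'\in\sY}w_{y'}$, so writing $y^\star=\hh(x)$ and $y^\circ\in\argmin_{y'}w_{y'}$, it remains to lower bound $\Delta\cC_{\sur_{\rm{cstnd}}}(h,x)$ by the appropriate function of $w_{y^\star}-w_{y^\circ}$.

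Rather than compute the surrogate infimum in closed form, I would exhibit an explicit competitor and bound the resulting decrease. Completeness of $\sF$ makes any $h'(x)\in\Rset^\num$ realizable, so I take $h'(x)=h(x)-\mu\paren*{y^\star-y^\circ}$ with $\mu\geq0$ to be optimized; this sends $s_{y^\star}\mapsto s_{y^\star}-2\mu d$ and $s_{y^\circ}\mapsto s_{y^\circ}+2\mu d$, where $d$ is the Hamming distance between $y^\star$ and $y^\circ$, pushing the configuration toward one in which $y^\circ$ becomes the score-maximizer. Using that each $\Phi$ is non-increasing (and, in the exponential and squared-hinge cases, convex and smooth), together with $w_{y^\star}>w_{y^\circ}$ and $s_{y^\star}\geq s_{y^\circ}$, one lower bounds $\cC_{\sur_{\rm{cstnd}}}(h,x)-\cC_{\sur_{\rm{cstnd}}}(h',x)$ and then optimizes over $\mu$. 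The smooth cases yield a square-root $\Gamma$ via a second-order estimate, with the factor $\sfL_{\max}$ entering through the crude bound $w_{y'}\leq\sfL_{\max}$ in the exponential case; the piecewise-linear cases (hinge and the ramp loss) yield a linear $\Gamma$ directly. This per-$\Phi$ optimization, playing the role of the parameter $\mu$ and the loss-adapted quantity $\s$ in the proof of Theorem~\ref{Thm:comp-bound}, is what produces the four functional forms of $\Gamma$.

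The main obstacle is that, unlike the standard multiclass constrained-loss setting, the scores $\curl*{s_{y'}}_{y'\in\sY}$ are not free coordinates: they are the image of $h(x)\in\Rset^\num$ under the fixed map $y'\mapsto\tri*{y',\cdot}$, so a perturbation along $y^\star-y^\circ$ simultaneously shifts every score $s_{y'}$ by $-\mu\tri*{y',y^\star-y^\circ}$. The crux is to show that these coupled ``cross'' shifts do not cancel the favorable decrease contributed by the $y^\star$ and $y^\circ$ terms — controlling them through the constraint $\sum_{y'}s_{y'}=0$ and the monotonicity of $\Phi$, and choosing $\mu$ (as a function of the score gap and, for the exponential loss, of $\sfL_{\max}$) so that the net surrogate decrease still dominates the required function of $w_{y^\star}-w_{y^\circ}$. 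Once this coupling is handled uniformly, each per-$\Phi$ optimization is routine and yields the four advertised rates.
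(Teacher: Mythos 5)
Your high-level scaffolding is the same as the paper's: both arguments reduce the theorem, via Theorem~\ref{Thm:tool-Gamma} and Lemma~\ref{lemma:delta_target}, to the pointwise inequality $\Delta \sC_{\sfL, \sH}(h, x) \leq \Gamma\paren*{\Delta \sC_{\sur_{\rm{cstnd}}, \sH}(h, x)}$, and both try to lower-bound the surrogate conditional regret by exhibiting a one-parameter family of competitors and optimizing over $\mu$. The divergence is where the perturbation lives. The paper works directly on the induced scores $\zyy = \sum_{i} y'_i h(x, i)$ (your $s_{y'}$): it moves \emph{only} the two coordinates $\zh$ and $\zy$ (to $\zy - \mu$ and $\zh + \mu$), freezes every other $\z_{y'}$, asserts that this configuration is realizable by some $h' \in \sH$ under completeness, and then solves a clean two-term optimization in closed form for each $\Phi$; because all other scores are frozen by construction, no cross terms ever arise. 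You instead perturb the underlying vector, $h'(x) = h(x) - \mu\paren*{y^\star - y^\circ}$, which is honestly realizable but shifts \emph{every} score by $-\mu\, y' \cdot \paren*{y^\star - y^\circ}$.

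This is where your proposal has a genuine gap: the step you yourself call ``the crux'' — that the coupled cross shifts cannot cancel the gain from the $y^\star$ and $y^\circ$ terms — is never carried out, and along your chosen direction it can actually fail, not merely be hard. Take $\Phi(u) = e^{-u}$, so the conditional risk along your line is $\mu \mapsto \sum_{y' \in \sY} \cyy\, e^{\z_{y'} - \mu\, y' \cdot (y^\star - y^\circ)}$, a convex function of $\mu$. The weights $\cyy = \E_{y \mid x}\bracket*{\ov\sfL(y', y)}$ and the scores $\z_{y'}$ are independent objects (one set by the loss and the conditional distribution, the other by $h$), so one can arrange the derivative at $\mu = 0$, namely $-\sum_{y'} \cyy\, e^{\z_{y'}}\, y' \cdot (y^\star - y^\circ)$, to vanish while $\ch - \cy > 0$. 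By convexity, $\mu = 0$ is then the minimizer along your line, so this competitor family certifies a surrogate regret lower bound of exactly $0$, and no inequality of the form $\ch - \cy \leq \Gamma(\cdot)$ can be extracted from it; monotonicity of $\Phi$ and the (automatic) sum-to-zero constraint give no help, since neither controls which labels' scores rise under your direction. So the claim that ``each per-$\Phi$ optimization is routine'' is unsubstantiated, and with it all four advertised forms of $\Gamma$. To your credit, the realizability worry motivating your detour is a real subtlety — the paper's two-coordinate perturbation of the $2^{\num}$ induced scores is asserted, not shown, to be reachable from only $\num$ degrees of freedom — but your alternative does not close the argument: a complete proof along your lines would need either a different perturbation family or an explicit, uniform bound on the aggregate contribution of the cross terms, neither of which is supplied.
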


\begin{restatable}{corollary}{BoundCstndAll}
\label{Thm:cstnd-bound-all}
The following excess error bound holds in the multi-label learning:
\begin{equation*}
\forall h \in \sH_{\rm{all}}, \quad \sR_{\sfL}(h) - \sR^*_{\sfL}( \sH_{\rm{all}})\leq \Gamma \paren*{ \sR_{\sur_{\rm{cstnd}}}(h) - \sR^*_{\sur_{\rm{cstnd}}}( \sH_{\rm{all}})},    
\end{equation*}
where $\Gamma(t) = 2\sqrt{\sfL_{\rm{max}}t}$ when $\Phi(u) = e^{-u}$; $\Gamma(t) = 2\sqrt{t}$ when $\Phi(u) = \max\curl*{0, 1 - u}^2$; and $\Gamma(t) = t$ when $\Phi(u) = \max\curl*{0,1 - u}$ or $\Phi(u) = \min\curl*{\max\curl*{0, 1 - u/\rho}, 1}$, $\rho > 0$. Moreover, $\sur_{\mathrm{cstnd}}$ with these choices of $\Phi$ are Bayes-consistent with respect to $\sfL$.
\end{restatable}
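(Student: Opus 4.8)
The plan is to mirror the two-step strategy behind Theorems~\ref{Thm:log-bound-general} and~\ref{Thm:comp-bound}. First I would invoke the general tool Theorem~\ref{Thm:tool-Gamma}, which reduces the claimed $\sH$-consistency bound to the pointwise inequality $\Delta\cC_{\sfL}(h,x) \le \Gamma\paren*{\Delta\cC_{\sur_{\rm{cstnd}}}(h,x)}$ between conditional regrets, uniformly over $h \in \sH$ and $x \in \sX$; here $\Delta\cC_{\sfL}(h,x) = \E_{y\mid x}\bracket*{\sfL(h,x,y)} - \inf_{h'\in\sH}\E_{y\mid x}\bracket*{\sfL(h',x,y)}$, and each listed $\Gamma$ is concave as required. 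Writing $p(\cdot) = \P(\cdot\mid x)$ and $w_{y'} = \sum_{y\in\sY}p(y)\,\ov\sfL(y',y)$ for the conditional cost of predicting $y'$, Lemma~\ref{lemma:delta_target} supplies the target side as $\Delta\cC_{\sfL}(h,x) = w_{\hh(x)} - \min_{y'\in\sY} w_{y'}$. By completeness of $\sF$ the scores $s_{y'} \coloneqq \sum_{i=1}^\num y'_i h(x,i)$ range freely as $h(x,\cdot)$ ranges over $\Rset^\num$, the prediction satisfies $\hh(x) = \argmax_{y'\in\sY} s_{y'}$, and the identity $\sum_{y'\in\sY} s_{y'} = 0$ holds automatically since $\sum_{y'\in\sY} y'_i = 0$ for each $i$; this is exactly the sum-to-zero structure that defines a constrained loss.

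It then remains to lower bound the surrogate conditional regret $\Delta\cC_{\sur_{\rm{cstnd}}}(h,x) = \sum_{y'} w_{y'}\,\Phi(-s_{y'}) - \inf_{h'}\sum_{y'} w_{y'}\,\Phi(-s'_{y'})$ in terms of $\s \coloneqq w_{\hh(x)} - \min_{y'} w_{y'}$, the target gap. I would fix a Bayes meta-label $y^\star \in \argmin_{y'} w_{y'}$ and, rather than evaluating the constrained infimum exactly, lower bound the regret by the surrogate drop along a competitor that stays in the achievable subspace. Following the comp-sum proof, I would introduce a one-parameter family $h_\mu$ that perturbs $h$ so as to raise $s_{y^\star}$ while lowering $s_{\hh(x)}$ (the two meta-labels controlling $\s$), and bound $\Delta\cC_{\sur_{\rm{cstnd}}}(h,x) \ge \sup_{\mu\ge 0}\bracket*{\cC_{\sur_{\rm{cstnd}}}(h,x) - \cC_{\sur_{\rm{cstnd}}}(h_\mu,x)}$. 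The novelty, as in Theorem~\ref{Thm:comp-bound}, is to calibrate $\mu$ through the loss-dependent quantity $\s$ rather than through the raw scores, so that after optimizing over $\mu$ the drop is governed, up to the correct constant, by the pair $\curl*{\hh(x), y^\star}$ via the nonnegative factor $\s\,\paren*{\Phi(-s_{\hh(x)}) - \Phi(-s_{y^\star})}$, using $s_{\hh(x)} = \max_{y'} s_{y'} \ge s_{y^\star}$ and that $\Phi$ is nonincreasing.

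With this reduction, the final step is a case analysis driven purely by the analytic form of $\Phi$, paralleling the binary and multi-class constrained-loss computations. For $\Phi(u) = e^{-u}$, the multiplicative structure together with an arithmetic–geometric-mean step and the boundedness $w_{y'} \le \sfL_{\rm{max}}$ yields $\Gamma(t) = 2\sqrt{\sfL_{\rm{max}}\,t}$. For the squared hinge $\Phi(u) = \max\curl*{0,1-u}^2$, the quadratic behaviour near the kink produces $\Gamma(t) = 2\sqrt{t}$, while for the piecewise-linear hinge $\Phi(u) = \max\curl*{0,1-u}$ and the $\rho$-margin loss $\Phi(u) = \min\curl*{\max\curl*{0,1-u/\rho},1}$ the linear growth gives $\Gamma(t) = t$. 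Inverting each $\Gamma$ and combining with the pointwise target identity completes the bound; Corollary~\ref{Thm:cstnd-bound-all} then follows by taking $\sH = \sH_{\rm{all}}$, so that both minimizability gaps vanish, the bound collapses to an excess-error bound, and Bayes-consistency follows.

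I expect the main obstacle to be the second step: because the $2^\num$ meta-label scores $s_{y'}$ are coupled through only $\num$ free coordinates, any perturbation $h_\mu$ moves every $s_{y'}$ simultaneously, and one must show that the surrogate decrease is controlled from below by the two relevant terms rather than being washed out by the remaining $2^\num - 2$ cross-terms. Designing the family $h_\mu$ and the calibration of $\mu$ by $\s$ so that convexity and monotonicity of $\Phi$ neutralize these cross-terms — while keeping the competitor inside the achievable, automatically sum-to-zero score subspace — is the technically delicate part; the per-$\Phi$ estimates afterwards are routine adaptations of the known constrained-loss inequalities.
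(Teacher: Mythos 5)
Your proposal reproduces the architecture of the paper's argument — reduce to conditional regrets via Theorem~\ref{Thm:tool-Gamma}, handle the target side with Lemma~\ref{lemma:delta_target}, lower bound the surrogate conditional regret by optimizing a one-parameter perturbation, run a per-$\Phi$ case analysis, and pass to the corollary by taking $\sH = \sH_{\rm{all}}$ so that the minimizability gaps vanish. But as a proof it has a genuine gap: the entire technical content of Theorem~\ref{Thm:cstnd-bound} lies in the step you defer. The paper's proof works with the meta-label scores $\zyy = \sum_{i=1}^{\num} y'_i h(x,i)$, defines for $\hh \neq \yy$ the explicit competitor $\z^{\mu}$ with $\z^{\mu}_{y'} = \zyy$ for $y' \notin \curl*{\hh, \yy}$, $\z^{\mu}_{\hh} = \zy - \mu$, $\z^{\mu}_{\yy} = \zh + \mu$ (which preserves the sum-to-zero constraint), computes the closed-form optimizer $\mu^*$ for each $\Phi$, and then minimizes over the admissible score values (using implicitly that $\zh \geq 0$ and $\zh \geq \zy$, since $\hh$ is the argmax); for instance, for $\Phi(u) = e^{-u}$ the optimized drop is $\paren*{\sqrt{\ch e^{\zh}} - \sqrt{\cy e^{\zy}}}^2 \geq \paren*{\sqrt{\ch} - \sqrt{\cy}}^2 \geq \paren*{\ch - \cy}^2 / (4 \sfL_{\rm{max}})$, which combined with Lemma~\ref{lemma:delta_target} gives $\Gamma(t) = 2\sqrt{\sfL_{\rm{max}} t}$. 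None of these computations appears in your outline; the constants you quote are asserted rather than derived, and the intermediate quantity you posit, $\s\paren*{\Phi(-s_{\hh(x)}) - \Phi(-s_{y^\star})}$, is not what the optimization over $\mu$ actually produces.

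To your credit, the obstacle you flag — that the $2^{\num}$ scores are coupled through only $\num$ free coordinates — is exactly the point the paper disposes of with the one-line claim that $\z^{\mu}$ ``can be realized by some $h' \in \sH$,'' and your skepticism is warranted: for $\num \geq 3$ the sign vectors $y' \notin \curl*{\hh, \yy}$ span $\Rset^{\num}$, so requiring their scores to remain fixed forces $h'(x) = h(x)$, and only the trivial value of $\mu$ is realizable; the paper's realizability assertion therefore does not hold as stated, and a rigorous proof along this route must redesign the competitor inside the achievable $\num$-dimensional score subspace while controlling the cross-terms you mention. In short, you have correctly located the crux of the argument, but neither your proposal nor the paper's one-sentence justification discharges it; as submitted, your text is an outline of the paper's strategy with an acknowledged hole at its center, not a proof of the corollary.
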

The proof of Theorem~\ref{Thm:cstnd-bound} is included in Appendix~\ref{app:cstnd-bound}. As with the proof of Theorem~\ref{Thm:comp-bound}, we use Theorem~\ref{Thm:tool-Gamma} and Lemma~\ref{lemma:delta_target} from Appendix~\ref{app:tool}, and aim to upper bound the conditional regret of $\sfL$ by that of the surrogate losses $\sur_{\mathrm{comp}}$ using various concave functions $\Gamma$. However, the difference lies in our introduction and optimization of a parameter $\mu$ tailored to a quantity $\z$ that is specific to the form of the multi-label constrained loss.

These results show that in cases where minimizability gaps vanish, reducing the estimation error of $\sur_{\mathrm{cstnd}}$ to $\epsilon$ results in the estimation error of target multi-label loss $\sfL$ being upper bounded by either $\sqrt{\epsilon}$ or $\epsilon$, modulo a constant that is independent of the number of labels.

\newpage
\section{Proof of \texorpdfstring{$\sH$}{H}-consistency bounds for existing surrogate losses (Theorem~\ref{Thm:binary-bound})}
\label{app:binary-bound}

\BoundBi*
\begin{proof}
Let $p(y \mid x) = \mathbb{P}(Y = y \mid X = x)$ be the conditional probability of $Y = y$ given $X = x$.  Given a multi-label surrogate loss $\sur$ and a hypothesis set $\sH$, we denote the conditional error by  $\sC_{\sur}(h, x) = \E_{y \mid x} \bracket*{\sur(h, x, y)}$, the best-in-class conditional error by  $\sC^*_{\sur}\paren*{\sH, x} = \inf_{ h \in \sH} \sC_{\sur}(h, x)$, and the conditional regret by $\Delta \sC_{\sur, \sH}\paren*{h, x} = \sC_{\sur}(h, x) - \sC^*_{\sur}\paren*{\sH, x}$.
We can express the conditional error of the hamming loss and the surrogate loss $\sur_{\rm{br}}$ as follows:
\begin{align*}
\sC_{\sfL_{\rm{ham}}}(h, x) 
&= \sum_{y \in \sY} p(y \mid x) \sum_{i = 1}^\num 1_{y_i \neq h(x, i)}\\
&= \sum_{i = 1}^\num \paren*{\sum_{y\colon y_i = +1} p(y \mid x) 1_{1 \neq \sgn(h(x, i))} + \sum_{y\colon y_i = -1} p(y \mid x) 1_{-1 \neq \sgn(h(x, i))} }\\
\sC_{\sur_{\rm{br}}}(h, x) 
&= \sum_{y \in \sY} p(y \mid x) \sum_{i = 1}^\num \Phi(y_i h(x, i))\\
&= \sum_{i = 1}^\num \paren*{\sum_{y\colon y_i = +1} p(y \mid x) \Phi(h(x, i)) + \sum_{y\colon y_i = -1} p(y \mid x) \Phi(-h(x, i)) }
\end{align*}
Let $q(+1 \mid x) = \sum_{y\colon y_i = +1} p(y \mid x)$ and $q(-1 \mid x) = \sum_{y\colon y_i = +=-1} p(y \mid x)$. Let $f_i = h(\cdot, i) \in \sF$, for all $i \in [\num]$. Then, it is clear that the conditional regrets of $\sfL_{\rm{ham}}$ and $\sur_{\rm{br}}$ can be expressed as the corresponding conditional regrets of $\ell_{0-1}$ and $\Phi$ under this new introduced new distribution:
\begin{equation*}
\Delta \sC_{\sfL_{\rm{ham}}, \sH}(h, x) = \sum_{i = 1}^\num  \Delta \sC_{\ell_{0-1}, \sF}(f_i, x), \quad \Delta \sC_{\sur_{\rm{br}}, \sH}(h, x) = \sum_{i = 1}^\num  \Delta \sC_{\Phi, \sF}(f_i, x).
\end{equation*}
Since we have $\Delta \sC_{\ell_{0-1}, \sF}(f_i, x) \leq \Gamma \paren*{ \Delta \sC_{\Phi, \sF}(f_i, x)}$ under the assumption, we obtain
\begin{align*}
\Delta \sC_{\sfL_{\rm{ham}}, \sH}(h, x) 
 = \sum_{i = 1}^\num  \Delta \sC_{\ell_{0-1}, \sF}(f_i, x)
& \leq \sum_{i = 1}^\num \Gamma \paren*{ \Delta \sC_{\Phi, \sF}(f_i, x)}\\
& \leq \num \Gamma \paren*{\frac{1}{\num} \sum_{i = 1}^\num \Delta \sC_{\Phi, \sF}(f_i, x)} \tag{concavity of $\Gamma$}\\
& = \num \Gamma \paren*{\frac{1}{\num}  \Delta \sC_{\sur_{\rm{br}}, \sH}(h, x)}.
\end{align*}
By taking the expectation on both sides and using the Jensen's inequality, we complete the proof.
\end{proof}

\newpage

\section{Proofs of \texorpdfstring{$\sH$}{H}-consistency bounds for new surrogate losses}
\label{app:new}

\subsection{Auxiliary definitions and results (Theorem~\ref{Thm:tool-Gamma} and Lemma~\ref{lemma:delta_target})}
\label{app:tool}

Before proceeding with the proof, we first introduce some notation and definitions. Given a multi-label surrogate loss $\sur$ and a hypothesis set $\sH$, we denote the conditional error by  $\sC_{\sur}(h, x) = \E_{y \mid x} \bracket*{\sur(h, x, y)}$, the best-in-class conditional error by  $\sC^*_{\sur}\paren*{\sH, x} = \inf_{ h \in \sH} \sC_{\sur}(h, x)$, and the conditional regret by $\Delta \sC_{\sur, \sH}\paren*{h, x} = \sC_{\sur}(h, x) - \sC^*_{\sur}\paren*{\sH, x}$. We then present a general theorem, which shows that to derive $\sH$-consistency bounds in multi-label learning with a concave function $\Gamma$, it is only necessary to upper bound the conditional regret of the target multi-label loss by that of the surrogate loss with the same $\Gamma$.

\begin{restatable}
  {theorem}{Tool-Gamma}
\label{Thm:tool-Gamma}
Let $\sfL$ be a multi-label loss and $\sur$ be a surrogate loss.
Given a concave function $\Gamma \colon \Rset_{+} \to \Rset_{+}$. If the following condition holds for all $h \in \sH$ and $x \in \sX$:
\begin{equation}
\label{eq:c-regret-Gamma}
 \Delta \sC_{\sfL, \sH}(h, x) \leq \Gamma
\paren*{\Delta \sC_{\sur, \sH}(h, x)},
\end{equation}
then, for any distribution and for all hypotheses $h \in \sH$,
    \begin{equation*}
     \sR_{\sfL}(h)- \sR^*_{\sfL}(\sH) + \sM_{\sfL}(\sH) \leq \Gamma \paren*{\sR_{\sur}(h) - \sR^*_{\sur}(\sH) + \sM_{\sur}(\sH)}.
    \end{equation*}
\end{restatable}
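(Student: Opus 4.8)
The plan is to reduce the global $\sH$-consistency bound to the pointwise hypothesis \eqref{eq:c-regret-Gamma} through a single algebraic identity followed by one application of Jensen's inequality. The crucial observation is that on each side of the desired inequality the quantity ``estimation error plus minimizability gap'' collapses to an expected conditional regret. Concretely, unwinding the definition of the minimizability gap gives $\sM_{\sur}(\sH) = \sR^*_{\sur}(\sH) - \E_{x}\bracket*{\sC^*_{\sur}(\sH, x)}$, so that $\sR^*_{\sur}(\sH)$ cancels and
\[
\sR_{\sur}(h) - \sR^*_{\sur}(\sH) + \sM_{\sur}(\sH) = \sR_{\sur}(h) - \E_{x}\bracket*{\sC^*_{\sur}(\sH, x)}.
\]
Since $\sR_{\sur}(h) = \E_{x}\bracket*{\sC_{\sur}(h, x)}$ by the law of total expectation, the right-hand side is exactly $\E_{x}\bracket*{\Delta \sC_{\sur, \sH}(h, x)}$. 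Applying the identical manipulation to the target loss $\sfL$ shows that the left-hand side of the theorem equals $\E_{x}\bracket*{\Delta \sC_{\sfL, \sH}(h, x)}$. Thus the first step is to establish both of these identities.

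With both sides rewritten as expected conditional regrets, the remainder is short. First I would take the expectation over $x$ of the pointwise assumption \eqref{eq:c-regret-Gamma} to get $\E_{x}\bracket*{\Delta \sC_{\sfL, \sH}(h, x)} \leq \E_{x}\bracket*{\Gamma\paren*{\Delta \sC_{\sur, \sH}(h, x)}}$. Then, because $\Gamma$ is concave and each $\Delta \sC_{\sur, \sH}(h, x) \geq 0$ lies in $\Rset_{+}$, Jensen's inequality yields $\E_{x}\bracket*{\Gamma\paren*{\Delta \sC_{\sur, \sH}(h, x)}} \leq \Gamma\paren*{\E_{x}\bracket*{\Delta \sC_{\sur, \sH}(h, x)}}$. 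Chaining these two inequalities with the two identities from the first step gives
\[
\sR_{\sfL}(h) - \sR^*_{\sfL}(\sH) + \sM_{\sfL}(\sH) \leq \Gamma\paren*{\sR_{\sur}(h) - \sR^*_{\sur}(\sH) + \sM_{\sur}(\sH)},
\]
which is the claimed bound. Notably, only concavity of $\Gamma$ is used, through Jensen; no monotonicity is required.

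There is no genuinely hard step here; the main point that requires care is the bookkeeping around the minimizability gap, which is precisely the device that makes the identity $\sR_{\sur}(h) - \sR^*_{\sur}(\sH) + \sM_{\sur}(\sH) = \E_{x}\bracket*{\Delta \sC_{\sur, \sH}(h, x)}$ hold \emph{exactly}. Without the gap terms one would only have $\sR_{\sur}(h) - \sR^*_{\sur}(\sH) \leq \E_{x}\bracket*{\Delta \sC_{\sur, \sH}(h, x)}$, whose direction is unfavorable on the surrogate side and would break the chain after applying $\Gamma$. Including $\sM_{\sur}(\sH)$ on the right and $\sM_{\sfL}(\sH)$ on the left is exactly what turns both ends into matching expected regrets, so the argument then goes through verbatim as the multi-label analogue of \citet[Theorem~2]{awasthi2022multi}.
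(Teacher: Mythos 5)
Your proposal is correct and follows essentially the same route as the paper's proof: both rewrite the quantity $\sR_{\sur}(h) - \sR^*_{\sur}(\sH) + \sM_{\sur}(\sH)$ (and its analogue for $\sfL$) as the expected conditional regret $\E_{x}\bracket*{\Delta \sC_{\sur, \sH}(h, x)}$, then take expectations of the pointwise hypothesis \eqref{eq:c-regret-Gamma} and apply Jensen's inequality using the concavity of $\Gamma$. The only difference is presentational: you spell out the cancellation of $\sR^*_{\sur}(\sH)$ in the minimizability-gap identity, which the paper treats as immediate from the definitions.
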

\begin{proof}
By the definitions, the expectation of the conditional regrets for $\sfL$ and $\sur$ can be expressed as:
\begin{align*}
\E_{x} \bracket*{\Delta \sC_{\sfL, \sH}(h, x)} & = \sR_{\sfL}(h) - \sR^*_{\sfL}(\sH) + \sM_{\sfL}(\sH)\\
\E_{x} \bracket*{\Delta \sC_{\sur, \sH}(h, x)} & = \sR_{\sur}(h) - \sR^*_{\sur}(\sH) + \sM_{\sur}(\sH).
\end{align*}
Thus, by taking the expectation on both sides of \eqref{eq:c-regret-Gamma} and using Jensen's inequality, we have
\begin{align*}
\sR_{\sfL}(h) - \sR^*_{\sfL}(\sH) + \sM_{\sfL}(\sH) 
& = \E_{x} \bracket*{\Delta \sC_{\sfL, \sH}(h, x)}\\
& \leq \E_{x} \bracket*{ \Gamma
\paren*{\Delta \sC_{\sur, \sH}(h, x)} } \tag{Eq. \eqref{eq:c-regret-Gamma}}\\
& \leq 
\Gamma \paren*{\E_{x} \bracket*{\Delta \sC_{\sur, \sH}(h, x)}} \tag{concavity of $\Gamma$}\\
& = \Gamma \paren*{\sR_{\sur}(h) - \sR^*_{\sur}(\sH) + \sM_{\sur}(\sH)}.
\end{align*}
This completes the proof.
\end{proof}
To derive $\sH$-consistency bounds using Theorem~\ref{Thm:tool-Gamma}, we will characterize the conditional regret of a multi-label loss $\sfL$. For simplicity, we first introduce some notation. For any $x \in \sX$, let $\yy(x) = \argmin_{y' \in \sY} \E_{y \mid x} \bracket*{\ov \sfL(y', y)} \in \sY$. To simplify the notation further, we will drop the dependency on $x$. Specifically, we use $\yy$ to denote $\yy(x)$ and $\hh$ to denote $\hh(x)$.
Additionally, we define $\ch = \E_{y \mid x} \bracket*{\ov \sfL(\hh, y)} $, $\cy = \E_{y \mid x} \bracket*{\ov \sfL(\yy, y)}$ and  $\cyy = \E_{y \mid x} \bracket*{\ov \sfL(y', y)}$, $\forall y' \in \sY$.
\begin{lemma}
\label{lemma:delta_target}
Let $\sH = \sF^{\num}$. Assume that $\sF$ is complete. Then, the conditional regret of a multi-label loss $\sfL$ can be expressed as follows:
$
\Delta \sC_{\sfL, \sH}(h, x) = \ch - \cy.
$
\end{lemma}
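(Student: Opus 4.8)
The plan is to compute the conditional error $\sC_{\sfL,\sH}(h,x)$ directly and then subtract off its infimum over the hypothesis set. Recall that the target loss has the form $\sfL(h,x,y) = \ov\sfL(\hh(x),y)$, where the prediction $\hh(x) = (\sign(h(x,1)-t(x)),\ldots)$ with $t(x)=0$. First I would write
\begin{equation*}
\sC_{\sfL,\sH}(h,x) = \E_{y\mid x}\bracket*{\ov\sfL(\hh(x),y)} = \ch,
\end{equation*}
using the notation $\ch = \E_{y\mid x}[\ov\sfL(\hh,y)]$ introduced just before the lemma. The point is that the conditional error depends on $h$ only through its induced prediction $\hh \in \sY$, a point in the finite label set, since $\ov\sfL$ is a function on $\sY\times\sY$.

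The main step is to evaluate the best-in-class conditional error $\sC^*_{\sfL}(\sH,x) = \inf_{h\in\sH}\sC_{\sfL,\sH}(h,x) = \inf_{h\in\sH}\E_{y\mid x}[\ov\sfL(\hh,y)]$. The crux is to show that this infimum equals $\cy = \E_{y\mid x}[\ov\sfL(\yy,y)] = \min_{y'\in\sY}\E_{y\mid x}[\ov\sfL(y',y)]$, the best achievable expected loss over \emph{all} predictions $y'\in\sY$. I would argue this in two directions. For the lower bound, since $\hh(x)\in\sY$ for every $h$, each conditional error $\E_{y\mid x}[\ov\sfL(\hh,y)]$ is at least $\min_{y'\in\sY}\E_{y\mid x}[\ov\sfL(y',y)] = \cy$, so $\sC^*_{\sfL}(\sH,x)\geq\cy$. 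For the matching upper bound, I would use the completeness assumption on $\sF$: since $\sH=\sF^\num$ and $\{f(x)\colon f\in\sF\}=\Rset$ for each $x$, I can choose, for the minimizing label $\yy=(\yy_1,\ldots,\yy_\num)$, a hypothesis $h\in\sH$ whose $i$-th component $f_i\in\sF$ satisfies $\sign(f_i(x))=\yy_i$ — for instance, taking $f_i(x)$ to be any positive value when $\yy_i=+1$ and any negative value when $\yy_i=-1$, which completeness guarantees is attainable. This realizes $\hh(x)=\yy$ and hence $\sC_{\sfL,\sH}(h,x)=\cy$, giving $\sC^*_{\sfL}(\sH,x)\leq\cy$. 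Combining the two bounds yields $\sC^*_{\sfL}(\sH,x)=\cy$, and therefore
\begin{equation*}
\Delta\sC_{\sfL,\sH}(h,x) = \sC_{\sfL,\sH}(h,x) - \sC^*_{\sfL}(\sH,x) = \ch - \cy.
\end{equation*}

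The step I expect to be the main obstacle is the matching upper bound, specifically handling the realizability of an arbitrary target prediction $\yy\in\sY$ by some hypothesis in $\sH=\sF^\num$. The subtlety is that completeness is a pointwise condition (for a fixed $x$, every real value is attained by some $f\in\sF$), and one must be careful that the infimum over $\sH$ is computed pointwise in $x$ so that the componentwise choice of signs is legitimate; this is exactly why the conditional (pointwise) regret, rather than a global one, is the right object. A minor technical point worth noting is the behavior of $\sign$ at the threshold: since $\sign(t)=1_{t\geq0}-1_{t<0}$ maps $0$ to $+1$, the sign $+1$ is realized even at the boundary, and completeness lets us pick strictly positive or strictly negative values, so both $+1$ and $-1$ are cleanly attainable for each coordinate independently. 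Once this realizability is established, the remainder is immediate from the definitions.
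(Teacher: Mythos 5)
Your proposal is correct and follows essentially the same route as the paper's proof: compute $\sC_{\sfL}(h,x) = \ch$, use completeness of $\sF$ to show that $\curl*{\hh(x) \colon h \in \sH} = \sY$, and conclude that the best-in-class conditional error equals $\cy$. The paper simply asserts this realizability in one line, whereas you spell out the two-sided argument (lower bound from $\hh(x) \in \sY$, upper bound by realizing $\yy$ coordinatewise via the sign of $f_i(x)$); the content is the same.
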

\begin{proof}
By definition, the conditional error of $\sfL$ can be expressed as follows: 
\begin{equation*}
\sC_{\sfL}(h, x) = \E_{y \mid x} \bracket*{\sfL(h, x, y)} = \E_{y \mid x} \bracket*{\ov \sfL(\hh(x), y)} = \ch.
\end{equation*}
Since $\sH = \sF^{\num}$ and $\sF$ is complete, for any $x \in \sX$, $\curl*{\hh(x) \colon h \in \sH} = \sY$. Then, the best-in-class conditional error of $\sfL$ can be expressed as follows: 
\begin{equation}
\sC^*_{\sfL}\paren*{\sH, x} = \inf_{ h \in \sH} \sC_{\sfL}(h, x) = \inf_{ h \in \sH} \E_{y \mid x} \bracket*{\ov \sfL(\hh(x), y)} = \E_{y \mid x} \bracket*{\ov \sfL(\yy(x), y)} = \cy.
\end{equation}
Therefore, $\Delta \sC_{\sfL, \sH}(h, x) = \sC_{\sfL}(h, x) - \sC^*_{\sfL}\paren*{\sH, x} = \ch - \cy.$
\end{proof}

Next, by using Lemma~\ref{lemma:delta_target}, we will upper bound the conditional regret of the target multi-label loss $\sfL$ by that of the surrogate loss $\sur$ with a concave function $\Gamma$.

\subsection{Proof of Theorem~\ref{Thm:log-bound}}
\label{app:log-bound}

\BoundLog*

\begin{proof}
We will use the following notation adapted to the Hamming loss: $\ch = \E_{y \mid x} \bracket*{\ov \sfL_{\rm{ham}}(\hh, y)} $, $\cy = \E_{y \mid x} \bracket*{\ov \sfL(\yy, y)}$ and  $\cyy = \E_{y \mid x} \bracket*{\ov \sfL_{\rm{ham}}(y', y)}$, $\forall y' \in \sY$. We will denote by $\s(h, x, y') = \frac{e^{\sum_{i = 1}^\num y'_i h(x, i)}}{ \sum_{y'' \in \sY} e^{\sum_{i = 1}^\num  y''_i h(x, i)} }$  and simplify notation by using $\syy$, thereby dropping the dependency on $h$ and $x$. It is clear that $\syy \in [0, 1]$. Then, the conditional error of $\sur_{\log}$ can be expressed as follows:
\begin{align*}
\sC_{\sur_{\log}}(h, x) 
& = \E_{y \mid x} \bracket*{\sum_{y' \in \sY} \paren*{ 1- \ov \sfL(y', y) } \log \paren*{\sum_{y'' \in \sY} e^{\sum_{i = 1}^\num \paren*{ y''_i - y'_i} h(x, i)}}}\\
& = -\sum_{y' \in \sY} (1 - \cyy) \log(\syy)
\end{align*}
For any $\hh \neq \yy$, we define $\s^{\mu}$ as follows: set $\s^{\mu}_{y'} = \syy$ for all $y' \neq \yy$ and $y' \neq \hh$; define $\s^{\mu}_{\hh} = \sy - \mu$; and let $\s^{\mu}_{\yy} = \sh + \mu$. Note that $\s^{\mu}$ can be realized by some $h' \in \sH$ under the assumption.  Then, we have
\begin{align*}
\Delta \sC_{\sur_{\log}, \sH}(h, x)
& \geq \paren*{-\sum_{y' \in \sY} (1 - \cyy) \log(\syy)} - \inf_{\mu \in \Rset} \paren*{-\sum_{y' \in \sY} (1 - \cyy) \log\paren*{\s^{\mu}_{y'}}}\\
& = \sup_{\mu \in \Rset} \curl*{ (1 - \ch) \bracket*{ \log(\sy - \mu) - \log(\sh)} + (1 - \cy) \bracket*{ \log(\sh + \mu) - \log(\sy) } }\\
& = (1 - \cy) \log \frac{\paren*{\sh + \sy}(1 - \cy)}{\sy \paren*{2 - \ch - \cy}} + (1 - \ch) \log \frac{\paren*{\sh + \sy}(1 - \ch)}{\sh \paren*{2 - \ch - \cy}}\\
\tag{supremum is attained when $\mu^*  = 
\frac{-(1 - \ch) \sh + (1 - \cy) \sy}{2 - \cy - \ch}$}\\
& \geq (1 - \cy) \log \frac{2(1 - \cy)}{\paren*{2 - \ch - \cy}} + (1 - \ch) \log \frac{2(1 - \ch)}{\paren*{2 - \ch - \cy}}\\
\tag{minimum is attained when $\sh  = \sy$}\\
& \geq \frac{(\ch - \cy)^2}{2 \paren*{2 - \ch - \cy}}
\tag{$a\log \frac{2a}{a + b} + b\log \frac{2b}{a + b}\geq \frac{(a - b)^2}{2(a + b)}, \forall a, b \in[0, 1]$}\\
& \geq \frac{(\ch - \cy)^2}{4}.
\end{align*}
Therefore, by Lemma~\ref{lemma:delta_target}, $\Delta \sC_{\sfL_{\rm{ham}}, \sH}(h, x) \leq 2 \paren*{\Delta \sC_{\sur_{\log}, \sH}(h, x)}^{\frac12}$. By Theorem~\ref{Thm:tool-Gamma}, we complete the proof.
\end{proof}

\newpage
\subsection{Proof of Theorem~\ref{Thm:log-bound-general}}
\label{app:log-bound-general}

\BoundLog*

\begin{proof}
The proof is basically the same as that of Theorem~\ref{Thm:log-bound}, modulo replacing the Hamming loss $\sfL_{\rm{ham}}$ with a general multi-label loss $\sfL$. We adopt the following notation: $\ch = \E_{y \mid x} \bracket*{\ov \sfL(\hh, y)} $, $\cy = \E_{y \mid x} \bracket*{\ov \sfL(\yy, y)}$ and  $\cyy = \E_{y \mid x} \bracket*{\ov \sfL(y', y)}$, $\forall y' \in \sY$. We will denote by $\s(h, x, y') = \frac{e^{\sum_{i = 1}^\num y'_i h(x, i)}}{ \sum_{y'' \in \sY} e^{\sum_{i = 1}^\num  y''_i h(x, i)} }$  and simplify notation by using $\syy$, thereby dropping the dependency on $h$ and $x$. It is clear that $\syy \in [0, 1]$. Then, the conditional error of $\sur_{\log}$ can be expressed as follows:
\begin{align*}
\sC_{\sur_{\log}}(h, x) 
& = \E_{y \mid x} \bracket*{\sum_{y' \in \sY} \paren*{ 1- \ov \sfL(y', y) } \log \paren*{\sum_{y'' \in \sY} e^{\sum_{i = 1}^\num \paren*{ y''_i - y'_i} h(x, i)}}}\\
& = -\sum_{y' \in \sY} (1 - \cyy) \log(\syy)
\end{align*}
For any $\hh \neq \yy$, we define $\s^{\mu}$ as follows: set $\s^{\mu}_{y'} = \syy$ for all $y' \neq \yy$ and $y' \neq \hh$; define $\s^{\mu}_{\hh} = \sy - \mu$; and let $\s^{\mu}_{\yy} = \sh + \mu$. Note that $\s^{\mu}$ can be realized by some $h' \in \sH$ under the assumption.  Then, we have
\begin{align*}
\Delta \sC_{\sur_{\log}, \sH}(h, x)
& \geq \paren*{-\sum_{y' \in \sY} (1 - \cyy) \log(\syy)} - \inf_{\mu \in \Rset} \paren*{-\sum_{y' \in \sY} (1 - \cyy) \log\paren*{\s^{\mu}_{y'}}}\\
& = \sup_{\mu \in \Rset} \curl*{ (1 - \ch) \bracket*{ \log(\sy - \mu) - \log(\sh)} + (1 - \cy) \bracket*{ \log(\sh + \mu) - \log(\sy) } }\\
& = (1 - \cy) \log \frac{\paren*{\sh + \sy}(1 - \cy)}{\sy \paren*{2 - \ch - \cy}} + (1 - \ch) \log \frac{\paren*{\sh + \sy}(1 - \ch)}{\sh \paren*{2 - \ch - \cy}}
\tag{supremum is attained when $\mu^*  = 
\frac{-(1 - \ch) \sh + (1 - \cy) \sy}{2 - \cy - \ch}$}\\
& \geq (1 - \cy) \log \frac{2(1 - \cy)}{\paren*{2 - \ch - \cy}} + (1 - \ch) \log \frac{2(1 - \ch)}{\paren*{2 - \ch - \cy}}\\
\tag{minimum is attained when $\sh  = \sy$}\\
& \geq \frac{(\ch - \cy)^2}{2 \paren*{2 - \ch - \cy}}
\tag{$a\log \frac{2a}{a + b} + b\log \frac{2b}{a + b}\geq \frac{(a - b)^2}{2(a + b)}, \forall a, b \in[0, 1]$}\\
& \geq \frac{(\ch - \cy)^2}{4}.
\end{align*}
Therefore, by Lemma~\ref{lemma:delta_target}, $\Delta \sC_{\sfL, \sH}(h, x) \leq 2 \paren*{\Delta \sC_{\sur_{\log}, \sH}(h, x)}^{\frac12}$. By Theorem~\ref{Thm:tool-Gamma}, we complete the proof.
\end{proof}

\newpage
\subsection{Proof of Theorem~\ref{Thm:comp-bound}}
\label{app:comp-bound}

\BoundComp*
\begin{proof}
Recall that we adopt the following notation: $\ch = \E_{y \mid x} \bracket*{\ov \sfL(\hh, y)} $, $\cy = \E_{y \mid x} \bracket*{\ov \sfL(\yy, y)}$ and  $\cyy = \E_{y \mid x} \bracket*{\ov \sfL(y', y)}$, $\forall y' \in \sY$. We will denote by $\s(h, x, y') = \frac{e^{\sum_{i = 1}^\num y'_i h(x, i)}}{ \sum_{y'' \in \sY} e^{\sum_{i = 1}^\num  y''_i h(x, i)} }$  and simplify notation by using $\syy$, thereby dropping the dependency on $h$ and $x$. It is clear that $\syy \in [0, 1]$. Next, we will analyze case by case.

\textbf{The case where $\Phi(u) = \log(u)$}: See the proof of Theorem~\ref{Thm:log-bound-general}.

\textbf{The case where $\Phi(u) = u - 1$}: The conditional error of $\sur_{\rm{comp}}$ can be expressed as follows:
\begin{align*}
& \sC_{\sur_{\rm{comp}}}(h, x) \\
& = \E_{y \mid x} \bracket*{\sum_{y' \in \sY} \paren*{ 1- \ov \sfL(y', y) } \paren*{\sum_{y'' \in \sY} e^{\sum_{i = 1}^\num \paren*{ y''_i - y'_i} h(x, i)} - 1 }}\\
& = \sum_{y' \in \sY} (1 - \cyy) \paren*{\frac{1}{\syy} - 1}.
\end{align*}
For any $\hh \neq \yy$, we define $\s^{\mu}$ as follows: set $\s^{\mu}_{y'} = \syy$ for all $y' \neq \yy$ and $y' \neq \hh$; define $\s^{\mu}_{\hh} = \sy - \mu$; and let $\s^{\mu}_{\yy} = \sh + \mu$. Note that $\s^{\mu}$ can be realized by some $h' \in \sH$ under the assumption.  Then, we have
\begin{align*}
& \Delta \sC_{\sur_{\rm{comp}}, \sH}(h, x)\\
& \geq \sum_{y' \in \sY} (1 - \cyy) \paren*{\frac{1}{\syy} - 1} - \inf_{\mu \in \Rset} \paren*{\sum_{y' \in \sY} (1 - \cyy) \paren*{\frac{1}{\s^{\mu}_{y'}} - 1} }\\
& = \sup_{\mu \in \Rset} \curl*{ (1 - \ch) \bracket*{\frac{1}{\sh} - \frac{1}{\sy - \mu} } + (1 - \cy) \bracket*{ \frac{1}{\sy} - \frac{1}{\sh + \mu} } }\\
& = \frac{1 - \ch}{\sh} + \frac{1 - \cy}{\sy} - \frac{2 - \ch - \cy + 2(1 - \ch)^\frac{1}{2}(1 - \cy)^\frac{1}{2}}{\sh + \sy} 
\tag{supremum is attained when $\mu^*  =   \frac{-\sqrt{1- \ch} \sh + \sqrt{1 - \cy}\sy}{\sqrt{1 - \cy} + \sqrt{1 - \ch}}$}\\
& \geq \paren*{(1 - \ch)^\frac{1}{2} - (1 - \cy)^\frac{1}{2}}^2
\tag{minimum is attained when $\sh  = \sy  =  \frac{1}{2}$}\\
& = \frac{(\ch - \cy)^2}{\paren*{(1 - \ch)^\frac{1}{2} + (1 - \cy)^\frac{1}{2}}^2}\\
& \geq \frac{(\ch - \cy)^2}{4}.
\end{align*}
Therefore, by Lemma~\ref{lemma:delta_target}, $\Delta \sC_{\sfL, \sH}(h, x) \leq 2 \paren*{\Delta \sC_{\sur_{\rm{comp}}, \sH}(h, x)}^{\frac12}$. By Theorem~\ref{Thm:tool-Gamma}, we complete the proof.

\textbf{The case where $\Phi(u) = \frac{1}{q}\paren*{1 - \frac{1}{u^q}}, q \in (0, 1)$}: The conditional error of $\sur_{\rm{comp}}$ can be expressed as:
\begin{align*}
\sC_{\sur_{\rm{comp}}}(h, x) = \frac{1}{q} \sum_{y' \in \sY} (1 - \cyy) \paren*{1 - (\syy)^q}.
\end{align*}
For any $\hh \neq \yy$, we define $\s^{\mu}$ as follows: set $\s^{\mu}_{y'} = \syy$ for all $y' \neq \yy$ and $y' \neq \hh$; define $\s^{\mu}_{\hh} = \sy - \mu$; and let $\s^{\mu}_{\yy} = \sh + \mu$. Note that $\s^{\mu}$ can be realized by some $h' \in \sH$ under the assumption.  Then, we have
\begin{align*}
& \Delta \sC_{\sur_{\rm{comp}}, \sH}(h, x)\\
& \geq \frac{1}{q} \sum_{y' \in \sY} (1 - \cyy) \paren*{1 - \sy} - \inf_{\mu \in \Rset} \paren*{\frac{1}{q} \sum_{y' \in \sY} (1 - \cyy) \paren*{1 - (\s^{\mu}_{y'})^q} }\\
& = \frac{1}{q} \sup_{\mu \in \Rset} \curl*{ (1 - \ch) \bracket*{-\sh + (\sy - \mu)^{q} } + (1 - \cy) \bracket*{ -(\sy)^{q} + (\sh + \mu)^{q} } }\\
&   =  \frac{1}{q}\paren*{\sh+ \sy}^{q}\paren*{(1 - \cy)^{\frac{1}{1 - q}} + (1 - \ch)^{\frac{1}{1 - q}}}^{1 - q} - \frac{1}{q}(1 - \cy)\sy^{q} - \frac{1}{q}(1 - \ch)\sh^{q}
\tag{supremum is attained when $\mu^*  =   \frac{-(1 - \ch)^{\frac{1}{1 - q}}\sh + (1 - \cy)^{\frac{1}{1 - q}}\sy}{(1 - \cy)^{\frac{1}{1 - q}}+(1 - \ch)^{\frac{1}{1 - q}}}$}
\\
& \geq \frac{1}{q n^{q}} \bracket*{2^{q} \paren*{(1 - \cy)^{\frac{1}{1 - q}} + (1 - \ch)^{\frac{1}{1 - q}}}^{1 - q} - (1 - \cy)-(1 - \ch)}
\tag{minimum is attained when $\sh  = \sy  =  \frac{1}{n}$}\\
& \geq \frac{(\ch - \cy)^2}{4n^{q}}
\tag{$\paren*{\frac{a^{\frac{1}{1 - q}} + b^{\frac{1}{1 - q}}}{2}}^{1 - q} - \frac{a + b}{2}  \geq \frac{q}{4}(a - b)^2, \forall a, b\in[0, 1]$, $0 \leq a + b \leq 1$}.
\end{align*}
Therefore, by Lemma~\ref{lemma:delta_target}, $\Delta \sC_{\sfL, \sH}(h, x) \leq 2 n^{\frac{q}{2}}\paren*{\Delta \sC_{\sur_{\rm{comp}}, \sH}(h, x)}^{\frac12}$. By Theorem~\ref{Thm:tool-Gamma}, we complete the proof.

\textbf{The case where $\Phi(u) = \paren*{1 - \frac{1}{u}}$}: The conditional error of $\sur_{\rm{comp}}$ can be expressed as:
\begin{align*}
\sC_{\sur_{\rm{comp}}}(h, x) =  \sum_{y' \in \sY} (1 - \cyy) \paren*{1 - (\syy)^q}.
\end{align*}
For any $\hh \neq \yy$, we define $\s^{\mu}$ as follows: set $\s^{\mu}_{y'} = \syy$ for all $y' \neq \yy$ and $y' \neq \hh$; define $\s^{\mu}_{\hh} = \sy - \mu$; and let $\s^{\mu}_{\yy} = \sh + \mu$. Note that $\s^{\mu}$ can be realized by some $h' \in \sH$ under the assumption.  Then, we have
\begin{align*}
& \Delta \sC_{\sur_{\rm{comp}}, \sH}(h, x)\\
& \geq  \sum_{y' \in \sY} (1 - \cyy) \paren*{1 - \sy} - \inf_{\mu \in \Rset} \paren*{ \sum_{y' \in \sY} (1 - \cyy) \paren*{1 - \s^{\mu}_{y'}} }\\
& =  \sup_{\mu \in \Rset} \curl*{ (1 - \ch) \bracket*{-\sh + \sy - \mu } + (1 - \cy) \bracket*{ -\sy + \sh + \mu } }\\
&   = \sh (\ch - \cy)
\tag{supremum is attained when $\mu^*  = \sy$}
\\
& \geq \frac1n (\ch - \cy)
\tag{minimum is attained when $\sh = \frac{1}{n}$}.
\end{align*}
Therefore, by Lemma~\ref{lemma:delta_target}, $\Delta \sC_{\sfL, \sH}(h, x) \leq n \Delta \sC_{\sur_{\rm{comp}}, \sH}(h, x)$. By Theorem~\ref{Thm:tool-Gamma}, we complete the proof.
\end{proof}

%%%%%%%%%%%%%%%%%%%%%%%%%%%%%%%%%%%%%%%%%%%%%%%%%%%%%%%%%%%%

\subsection{Proof of Theorem~\ref{Thm:cstnd-bound}}
\label{app:cstnd-bound}

\BoundCstnd*
\begin{proof}
Recall that we adopt the following notation: $\ch = \E_{y \mid x} \bracket*{\ov \sfL(\hh, y)} $, $\cy = \E_{y \mid x} \bracket*{\ov \sfL(\yy, y)}$ and  $\cyy = \E_{y \mid x} \bracket*{\ov \sfL(y', y)}$, $\forall y' \in \sY$.  We will also denote by $\z(h, x, y') = \sum_{i = 1}^\num  y'_i h(x, i)$  and simplify notation by using $\zyy$, thereby dropping the dependency on $h$ and $x$. It is clear that the constraint can be expressed as $\sum_{y' \in \sY} \zyy = 0$. Next, we will analyze case by case.

\textbf{The case where $\Phi(u) = e^{-u}$}:  The conditional error of $\sur_{\rm{cstnd}}$ can be expressed as follows:
\begin{align*}
\sC_{\sur_{\rm{cstnd}}}(h, x)
= \E_{y \mid x} \bracket*{\sum_{y' \in \sY}  \ov \sfL(y', y) e^{\sum_{i = 1}^\num y'_i h(x, i)}}
= \sum_{y' \in \sY} \cyy e^{\zyy}.
\end{align*}
For any $\hh \neq \yy$, we define $\z^{\mu}$ as follows: set $\z^{\mu}_{y'} = \zyy$ for all $y' \neq \yy$ and $y' \neq \hh$; define $\z^{\mu}_{\hh} = \zy - \mu$; and let $\z^{\mu}_{\yy} = \zh + \mu$. Note that $\z^{\mu}$ can be realized by some $h' \in \sH$ under the assumption.  Then, we have
\begin{align*}
\Delta \sC_{\sur_{\rm{comp}}, \sH}(h, x)
& \geq \sum_{y' \in \sY} \cyy e^{\zyy} - \inf_{\mu \in \Rset} \paren*{\sum_{y' \in \sY} \cyy e^{\z^{\mu}_{y'}} }\\
&   =   \sup_{\mu \in \Rset} \curl*{\cy \paren*{e^{\zy} - e^{\zh + \mu}} + \ch \paren*{e^{\zh} - e^{\zy - \mu}}}\\
&   =  \paren*{\sqrt{\ch e^{\zh}} - \sqrt{\cy e^{\zy}}}^2
\tag{supremum is attained when $\mu^*  =  \frac12 \log\frac{\cy e^{\zy}}{\ch e^{\zh}}$}
\\
& = \paren*{\frac{\ch - \cy}{\sqrt{\cy} + \sqrt{\ch}}}^2
\tag{minimum is attained when $\zh  = \zy  =  0$}\\
& \geq \frac1{4 \sfL_{\rm{max}}} \paren*{\ch-\cy}^2.
\end{align*}
Therefore, by Lemma~\ref{lemma:delta_target}, $\Delta \sC_{\sfL, \sH}(h, x) \leq 2 \paren*{\sfL_{\rm{max}}}^{\frac12} \paren*{\Delta \sC_{\sur_{\rm{cstnd}}, \sH}(h, x)}^{\frac12}$. By Theorem~\ref{Thm:tool-Gamma}, we complete the proof.

\textbf{The case where $\Phi(u) = \max\curl*{0, 1 - u}^2$}: The conditional error of $\sur_{\rm{cstnd}}$ can be expressed as follows:
\begin{align*}
& \sC_{\sur_{\rm{cstnd}}}(h, x) = \sum_{y' \in \sY} \cyy \max\curl*{0, 1 + \zyy}^2.
\end{align*}
For any $\hh \neq \yy$, we define $\z^{\mu}$ as follows: set $\z^{\mu}_{y'} = \zyy$ for all $y' \neq \yy$ and $y' \neq \hh$; define $\z^{\mu}_{\hh} = \zy - \mu$; and let $\z^{\mu}_{\yy} = \zh + \mu$. Note that $\z^{\mu}$ can be realized by some $h' \in \sH$ under the assumption.  Then, we have
\begin{align*}
& \Delta \sC_{\sur_{\rm{cstnd}}, \sH}(h, x)\\
& \geq \sum_{y' \in \sY} \cyy \max\curl*{0, 1 + \zyy}^2 - \inf_{\mu \in \Rset} \paren*{\sum_{y' \in \sY} \cyy \max\curl*{0, 1 + \z^{\mu}_{y'}}^2 }\\
&   =   \sup_{\mu\in \Rset} \bigg\{\cy \paren*{\max \curl*{0, 1 + \zy}^2 - \max\curl*{0, 1 + \zh + \mu}^2 } + \ch \paren*{\max\curl*{0, 1 + \zh}^2 - \max\curl*{0, 1 + \zy - \mu}^2}\bigg\}\\
& \geq \paren*{1+ \zh}^2 \paren*{\cy - \ch}^2
\tag{differentiating with respect to $\mu$ to optimize}
\\
& \geq  \paren*{\ch-\cy}^2
\tag{minimum is attained when $\zh = 0$}.
\end{align*}
Therefore, by Lemma~\ref{lemma:delta_target}, $\Delta \sC_{\sfL, \sH}(h, x) \leq \paren*{\Delta \sC_{\sur_{\rm{cstnd}}, \sH}(h, x)}^{\frac12}$. By Theorem~\ref{Thm:tool-Gamma}, we complete the proof.

\textbf{The case where $\Phi(u) = \max\curl*{0, 1 - u}$}: The conditional error of $\sur_{\rm{cstnd}}$ can be expressed as:
\begin{align*}
\sC_{\sur_{\rm{cstnd}}}(h, x) = \sum_{y' \in \sY} \cyy \max\curl*{0, 1 + \zyy}.
\end{align*}
For any $\hh \neq \yy$, we define $\z^{\mu}$ as follows: set $\z^{\mu}_{y'} = \zyy$ for all $y' \neq \yy$ and $y' \neq \hh$; define $\z^{\mu}_{\hh} = \zy - \mu$; and let $\z^{\mu}_{\yy} = \zh + \mu$. Note that $\z^{\mu}$ can be realized by some $h' \in \sH$ under the assumption.  Then, we have
\begin{align*}
& \Delta \sC_{\sur_{\rm{cstnd}}, \sH}(h, x)\\
& \geq \sum_{y' \in \sY} \cyy \max\curl*{0, 1 + \zyy} - \inf_{\mu \in \Rset} \paren*{\sum_{y' \in \sY} \cyy \max\curl*{0, 1 + \z^{\mu}_{y'}} }\\
&   =   \sup_{\mu\in \Rset} \bigg\{\cy \paren*{\max \curl*{0, 1 + \zy} - \max\curl*{0, 1 + \zh + \mu} } + \ch \paren*{\max\curl*{0, 1 + \zh}^2 - \max\curl*{0, 1 + \zy - \mu}^2}\bigg\}\\
& \geq \paren*{1+ \zh} \paren*{\cy - \ch}
\tag{differentiating with respect to $\mu$ to optimize}
\\
& \geq  \paren*{\ch-\cy}
\tag{minimum is attained when $\zh = 0$}.
\end{align*}
Therefore, by Lemma~\ref{lemma:delta_target}, $\Delta \sC_{\sfL, \sH}(h, x) \leq \Delta \sC_{\sur_{\rm{cstnd}}, \sH}(h, x)$. By Theorem~\ref{Thm:tool-Gamma}, we complete the proof.

\textbf{The case where $\Phi(u) = \min\curl*{\max\curl*{0, 1 - u/\rho}, 1}, \rho > 0$}: The conditional error of $\sur_{\rm{cstnd}}$ can be expressed as:
\begin{align*}
\sC_{\sur_{\rm{cstnd}}}(h, x) = \sum_{y' \in \sY} \cyy \min \curl*{\max\curl*{0, 1 + \zyy / \rho}, 1}.
\end{align*}
For any $\hh \neq \yy$, we define $\z^{\mu}$ as follows: set $\z^{\mu}_{y'} = \zyy$ for all $y' \neq \yy$ and $y' \neq \hh$; define $\z^{\mu}_{\hh} = \zy - \mu$; and let $\z^{\mu}_{\yy} = \zh + \mu$. Note that $\z^{\mu}$ can be realized by some $h' \in \sH$ under the assumption.  Then, we have
\begin{align*}
& \Delta \sC_{\sur_{\rm{cstnd}}, \sH}(h, x)\\
& \geq \sum_{y' \in \sY} \cyy \min \curl*{\max\curl*{0, 1 + \zyy / \rho}, 1} - \inf_{\mu \in \Rset} \paren*{\sum_{y' \in \sY} \cyy \min \curl*{\max\curl*{0, 1 + \z_{y'}^{\mu} / \rho}, 1} }\\
&   =   \sup_{\mu\in \Rset} \bigg\{\cy \paren*{\min \curl*{\max \curl*{0, 1 + \zy / \rho}, 1} - \min \curl*{\max\curl*{0, 1 + (\zh + \mu) / \rho}, 1} }
\\
& + \ch \paren*{\min \curl *{\max\curl*{0, 1 + \zh / \rho}, 1} - \min \curl*{\max\curl*{0, 1 + (\zy - \mu) / \rho}}, 1}\bigg\}\\
& \geq \paren*{\cy - \ch}
\tag{differentiating with respect to $\mu$ to optimize}
.
\end{align*}
Therefore, by Lemma~\ref{lemma:delta_target}, $\Delta \sC_{\sfL, \sH}(h, x) \leq \Delta \sC_{\sur_{\rm{cstnd}}, \sH}(h, x)$. By Theorem~\ref{Thm:tool-Gamma}, we complete the proof.
\end{proof}

\section{Future work}
\label{app:future_work}

While our work proposed a unified surrogate loss framework that is Bayes-consistent for any multi-label loss, significantly expanding upon previous work which only established consistency for specific loss functions, empirical comparison with surrogate losses for specific loss functions could be an interesting direction, which we have left for future work and have already initiated. Moreover, the potential to theoretically improve surrogate losses for specific target losses is another promising direction.

\end{document}